\def\eqref#1{equation~\ref{#1}}
\def\1{\bm{1}}
\DeclareMathAlphabet{\mathsfit}{\encodingdefault}{\sfdefault}{m}{sl}
\SetMathAlphabet{\mathsfit}{bold}{\encodingdefault}{\sfdefault}{bx}{n}
\newcommand{\E}{\mathbb{E}}
\newcommand{\softmax}{\mathrm{softmax}}
\DeclareMathOperator*{\argmin}{arg\,min}
\newtheorem*{rep@theorem}{\rep@title}
\newcommand{\newreptheorem}[2]{%
\newenvironment{rep#1}[1]{%
 \def\rep@title{#2 \ref{##1}}%
 \begin{rep@theorem}}%
 {\end{rep@theorem}}}
\newcommand{\ALGtikzmarkcolor}{black}
\newcommand{\ALGtikzmarkextraindent}{4pt}
\newcommand{\ALGtikzmarkverticaloffsetstart}{-1ex}
\newcommand{\ALGtikzmarkverticaloffsetend}{-.5ex}
\newcounter{ALG@tikzmark@tempcnta}
\newcommand\ALG@tikzmark@start{%
    \global\let\ALG@tikzmark@last\ALG@tikzmark@starttext%
    \expandafter\edef\csname ALG@tikzmark@\theALG@nested\endcsname{\theALG@tikzmark@tempcnta}%
    \tikzmark{ALG@tikzmark@start@\csname ALG@tikzmark@\theALG@nested\endcsname}%
    \addtocounter{ALG@tikzmark@tempcnta}{1}%
}
\def\ALG@tikzmark@starttext{start}
\newcommand\ALG@tikzmark@end{%
    \ifx\ALG@tikzmark@last\ALG@tikzmark@starttext
    \else
        \tikzmark{ALG@tikzmark@end@\csname ALG@tikzmark@\theALG@nested\endcsname}%
        \tikz[overlay,remember picture] \draw[\ALGtikzmarkcolor] let \p{S}=($(pic cs:ALG@tikzmark@start@\csname ALG@tikzmark@\theALG@nested\endcsname)+(\ALGtikzmarkextraindent,\ALGtikzmarkverticaloffsetstart)$), \p{E}=($(pic cs:ALG@tikzmark@end@\csname ALG@tikzmark@\theALG@nested\endcsname)+(\ALGtikzmarkextraindent,\ALGtikzmarkverticaloffsetend)$) in (\x{S},\y{S})--(\x{S},\y{E});%
    \fi
    \gdef\ALG@tikzmark@last{end}%
}
\apptocmd{\ALG@beginblock}{\ALG@tikzmark@start}{}{\errmessage{failed to patch}}
\pretocmd{\ALG@endblock}{\ALG@tikzmark@end}{}{\errmessage{failed to patch}}
\definecolor{darkblue}{rgb}{0.0,0.0,0.55}
\theoremstyle{plain}
\newtheorem{theorem}{Theorem}[section]
\newtheorem{proposition}[theorem]{Proposition}
\theoremstyle{definition}
\newtheorem{definition}[theorem]{Definition}
\theoremstyle{remark}
\newtheorem{remark}[theorem]{Remark}
\DeclarePairedDelimiterX{\bdx}[2]{[}{]}{%
  #1\;\delimsize\|\;#2}
\newcommand{\bd}{D\bdx}
\newcommand{\V}{\mathbb V}
\newcommand{\deq}{:=}
\newcommand{\dm}{\mathcal E}
\newcommand{\iid}{\textit{i.i.d.}\xspace}
\newcommand{\eg}{\textit{e.g.}\xspace}
\newcommand{\nlsum}{\sum\nolimits}
\newcommand{\kl}{\textup{KL}}
\newcommand{\scal}[2]{\langle #1, #2\rangle}
\title{Ensembling over Classifiers: a Bias-Variance Perspective}
\author{
        \name \!\!Neha Gupta\thanks{Work done at Google.} \email{nehagupta@cs.stanford.edu}\\
        \name Jamie Smith \email{jamieas@google.com}\\
        \name\!\!\! Ben Adlam \email{adlam@google.com}\\
        \name Zelda Mariet \email{zmariet@google.com}\\
}
\begin{document}

\maketitle

\begin{abstract}
Ensembles are a straightforward, remarkably effective method for improving the accuracy, calibration, and robustness of models on classification tasks; yet, the reasons that underlie their success remain an active area of research. We build upon the extension to the bias-variance decomposition by~\citet{generalized_bvd} in order to gain crucial insights into the behavior of ensembles of classifiers. Introducing a dual reparameterization of the bias-variance tradeoff, we first derive generalized laws of total expectation and variance for nonsymmetric losses typical of classification tasks. Comparing conditional and bootstrap bias/variance estimates, we then show that conditional estimates necessarily incur an irreducible error. Next, we show that ensembling in dual space reduces the variance and leaves the bias unchanged, whereas standard ensembling can arbitrarily affect the bias. Empirically, standard ensembling \emph{reduces} the bias, leading us to hypothesize that ensembles of classifiers may perform well in part because of this unexpected reduction. We conclude by an empirical analysis of recent deep learning methods that ensemble over hyperparameters, revealing that these techniques indeed favor bias reduction. This suggests that, contrary to classical wisdom, targeting bias reduction may be a promising direction for classifier ensembles.
\end{abstract}

\section{Introduction}
The success of deep learning has catalyzed many extensions of classical learning theory to modern models and algorithms. In particular, the double-descent phenomenon \citep{Belkin15849} has inspired a wide breadth of research, in which the classical bias-variance (BV) decomposition has been key~\citep{neal2019modern,adlam2020}. Such analyses have been instrumental to understanding the performance of various strategies that improve deep learning methods, including ensembles of deep networks~\citep{nnens, deep-ensembles} --- a simple method with state-of-the-art robustness and uncertainty results~\citep{ovadia2019, gustafsson2019evaluating}.  

Most bias-variance analyses are specific to the mean-squared error (MSE) loss. Although it is possible to analyze classifiers from an MSE perspective~\citep{yang2020rethinking}, such a restriction inevitably reduces the power of our analysis. However, generalizing bias-variance decompositions to non-MSE losses is challenging. Although the MSE allows a decomposition based on the performance of the mean predictor, this is a peculiarity of the MSE rather than the rule. In the general case, bias-variance decompositions require manipulating a ``central prediction'' which is much less amenable to analysis. And although bias-variance decompositions can be derived for losses that take the form of a Bregman divergence~\citep{generalized_bvd}, such decompositions are difficult to interpret, and appear to break away from standard intuitions.


We begin by bridging the gap between the classical bias-variance decomposition and its generalization to losses such as the KL divergence. Analyzing the properties of the bias and variance for non-symmetric losses, we characterize their departure from standard intuition, and show that alternate ensembling techniques recover the standard behavior of ensembles under the MSE. Crucially, our analysis suggests that ensembles of classifiers may be particularly effective due to their ability to reduce the bias as well as the variance.

Based on this theoretical analysis, we investigate recent promising techniques that ensemble over the hyperparameters of deep learning algorithms~\citep{wenzel2020hyperparameter,zaidi2020neural}. We will see that, by augmenting the hypothesis space, these methods achieve higher bias reduction but comparable or worse variance reduction than deep ensembles.

\paragraph{Contributions.} Using a dual reparameterization of the central prediction in the bias-variance decomposition of~\citet{generalized_bvd}, we tease apart which behaviors of the bias and variance are specific to the mean squared error,\footnote{Or, more accurately, to \emph{symmetric} losses.} and which behaviors are common to all Bregman divergences. Our key contributions are the following.
\begin{itemize}[leftmargin=*]
\itemsep0em
    \item The central prediction for arbitrary Bregman divergences can be viewed as the primal projection of the mean prediction in a dual space defined by the loss itself.
    \item The variance for any arbitrary Bregman divergence satisfies a generalized law of total variance.
    \item Conditional estimates of the bias and variance are biased by an irreducible quantity; iterative bootstrapping can improve these estimates.
    \item Ensembling predictions in dual space recovers the behavior of ensembles under the squared Euclidean loss.
    \item Vanilla ensembling can increase or decrease the bias; empirically, vanilla ensembling \emph{reduces} the bias.
    \item Recent ensemble methods show lower bias and higher variance than ensembling over random seeds.
\end{itemize}
\section{Related work}
\paragraph{Bias-variance decompositions.} The bias-variance tradeoff has been an important tool in understanding the behavior of machine learning models~\citep{geman1992neural, kong1995error, breiman1996bias, adlam2020,yang2020rethinking, dascoli2020double,neal2019modern}. Key to the decomposition are the notions of a ``central label'' (if label noise exists) --- and of a ``central prediction''.  Most analyses focus on the bias-variance decomposition for the Euclidean square loss; in this case, the central label and prediction correspond respectively to the \emph{mean} label and prediction. \citet{james2003variance} proposed a more general decomposition for symmetric losses,~\citet{Domingos00aunified} focussed on the 0-1 loss, and~\citet{generalized_bvd} proposed a generalization to the space of all Bregman divergences. \cite{hansen2000general} identify which loss functions admit bias-variance decompositions with specific properties (but do not analyze the decompositions themselves). \cite{jiang2017generalized, buschjager} decompose twice differentiable losses via second order Taylor expansion. In the general case --- including for the KL divergence --- the resulting decompositions are approximate. Additionally, the variance term may depend on target labels, in a significant departure from any standard definition of a variance.

\paragraph{Bregman divergences.} Bregman divergences are a generalization of the notion of distance, similar to but less restrictive than metrics. Bregman divergences and operations in their associated dual space are instrumental to optimization techniques such as mirror descent and dual averaging~\citep{nemirovski1983problem,DBLP:journals/mp/Nesterov09,juditsky2021unifying}. Closer to our work, the Bregman representative defined in~\citet{bregman-clustering} is closely related to the central label defined in~\citep{generalized_bvd}.

\paragraph{Ensembles of deep networks.} Ensembling combines the predictions of multiple models to improve upon the performance of a single model; see, \eg,~\citep{zhou2019ensemble, ensembles-in-ml}. Recently, ensembling over neural networks which only differ in their random seed (``deep ensembles'') has been shown to be a particularly strong baseline for a variety of benchmarks~\citep{deep-ensembles}. This result, in turn, prompted further research into alternative ensemble methods, including ensembling over hyper-parameters~\citep{wenzel2020hyperparameter}, architectures~\citep{zaidi2020neural}, and joint ensemble training~\citep{webb2020ensemble}. In parallel, several hypotheses have been proposed to explain the performance of deep ensembles. \citet{fort2019deep} showed empirically that deep ensembles explore diverse modes in the loss landscape. \citet{allen2020towards} argued that the effectiveness of deep ensembles hinges on the assumption that inputs to the model have multiple correct features, which will be learned by the different ensemble members. \citet{wilson2020bayesian, hoffmann2021deep} analyzed deep ensembling as a Bayesian averaging procedure. \citet{masegosa,ortega2021diversity} focus on understanding the relationship between the generalization of neural networks and the diversity for deep ensembles, both from theoretical and empirical perspectives. \citet{lobacheva2020power,kobayashi2021reversed} considered the interplay between model and ensemble size. 

\paragraph{Results specific to the KL divergence.}  Due to its importance in machine learning, the KL divergence (and thus, the cross-entropy loss) is one of the few non-MSE losses for which the bias-variance tradeoff has been specifically analyzed~\citep{heskes1998bias, yang2020rethinking}. For the KL divergence, the ``central predictor'' corresponds to an average in log-probability space, which has been studied in many works, including~\citep{Brofos2019ABD,webb2020ensemble}. The fact that the bias remains unchanged when averaging predictions in log-probability space has been mentioned briefly in~\citep{diettrich}.




\section{Bias-Variance decomposition}
 We begin with some background material regarding Bregman divergences, which also serves to set our notation. We refer the interested reader to \citet{cesa2006prediction} for more background on Bregman divergences and proofs. 

Our theoretical results are presented for arbitrary Bregman divergences; proofs not provided in the main text can be found in \cref{app:proofs}. Our experimental results use the KL divergence (which corresponds to the cross-entropy loss for one-hot labels), which is standard for classification models.
\subsection{Preliminaries}
Let $\mathcal X$ be a closed, convex subset of $\mathbb R^d$, and let $F: \mathcal X \to \mathbb R$ be a strictly convex, differentiable function over $\mathcal X$. The Bregman divergence associated with $F$ is the function $D_F: \mathcal X \times \mathcal X \to \mathbb R^+$, such that
\begin{equation}
\label{eq:bd}
    D_F[y \| x] \deq F(y) - F(x) - \nabla F(x)^\top (y-x).
\end{equation}
It follows directly from the convexity of $F$ that $D_F$ is convex in its first argument, although not necessarily in its second argument~\citep{bregman-convex}.

The Bregman divergence of the convex conjugate $F^*$ of $F$ will also be of particular importance. We recall that the convex conjugate of a convex function $F$ is defined as
\[F^*(z) = \sup_x~\langle z, x \rangle - F(x).\]
As $F$ is differentiable, we denote by $x^* = \nabla F(x)$ the \emph{dual} of $x \in \mathcal X$; in particular, $x = (x^*)^* = \nabla F^*(\nabla F(x))$. 

A Bregman divergence $D_F$ and its conjugate equivalent $D_{F^*}$ are related by the following equality:
\begin{proposition}[{\citet{cesa2006prediction}}]
\label{prop:bregman-dual}
    $\forall x, y \in \mathcal X, D_F\bdx x y = D_{F^*}\bdx {y^*} {x^*}$.
\end{proposition}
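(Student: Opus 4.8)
The plan is to expand both sides directly from the definition in \cref{eq:bd} and reconcile them using two standard properties of the convex conjugate: the Fenchel--Young equality at the dual points, and the inverse relationship between the gradients of $F$ and $F^*$. Specifically, since $x^* = \nabla F(x)$ attains the supremum defining $F^*$, we have $F(x) + F^*(x^*) = \langle x, x^*\rangle$, and likewise $F(y) + F^*(y^*) = \langle y, y^*\rangle$. Moreover, because $F$ is strictly convex and differentiable, $\nabla F^* = (\nabla F)^{-1}$, so that $\nabla F^*(x^*) = x$. These are the only tools needed; the rest is bookkeeping.

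First I would write the right-hand side by applying the definition of $D_{F^*}$ to the points $y^*$ and $x^*$:
\[
    D_{F^*}[y^* \,\|\, x^*] = F^*(y^*) - F^*(x^*) - \nabla F^*(x^*)^\top (y^* - x^*).
\]
Then I would substitute $\nabla F^*(x^*) = x$, and eliminate the conjugate terms via Fenchel--Young, replacing $F^*(x^*)$ by $\langle x, x^*\rangle - F(x)$ and $F^*(y^*)$ by $\langle y, y^*\rangle - F(y)$. After expanding the inner product $x^\top(y^* - x^*)$, the two copies of $\langle x, x^*\rangle$ cancel, leaving the expression $\langle y, y^*\rangle - F(y) + F(x) - \langle x, y^*\rangle$.

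Finally I would expand the left-hand side as $D_F[x \,\|\, y] = F(x) - F(y) - \langle y^*, x - y\rangle$, using $\nabla F(y) = y^*$, and observe that distributing the inner product yields exactly the same four terms up to the symmetry $\langle x, y^*\rangle = \langle y^*, x\rangle$. Matching the two then gives the claimed identity.

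I expect no genuine obstacle here, only one point requiring care: justifying the two duality facts—that $\nabla F^* = (\nabla F)^{-1}$ and that Fenchel--Young holds with equality at the dual points. Both are standard for the strictly convex differentiable $F$ considered here and can be invoked directly from~\citet{cesa2006prediction}; with them in hand the computation is a short cancellation in which the mixed terms vanish.
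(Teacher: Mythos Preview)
Your argument is correct and is the standard route: expand both sides via \cref{eq:bd}, use Fenchel--Young with equality at the dual points to trade $F^*(x^*)$ and $F^*(y^*)$ for expressions in $F$, substitute $\nabla F^*(x^*)=x$, and cancel. The only caveat is one you already flagged, namely that $\nabla F^* = (\nabla F)^{-1}$ and the attainment in Fenchel--Young require some regularity on $F$ (strict convexity and differentiability as assumed here, plus the tacit assumption that the supremum is attained); invoking~\citet{cesa2006prediction} for these is appropriate.

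There is nothing to compare against in the paper itself: \cref{prop:bregman-dual} is stated without proof and attributed directly to~\citet{cesa2006prediction}. Your derivation is exactly the kind of computation that reference carries out, so the approaches coincide.
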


\subsection{General statement}
A bias-variance analysis decomposes the average divergence between two \emph{independent} random variables, $\E D[Y\|X]$, into a bias and two separate variance terms. The bias measures the divergence between the average label and the average prediction; the variances measure the amount of fluctuation in the labels (Bayes error) and predictions (model variance). 

These variances measure fluctuations around an ``average'', or ``central'' variable. Under the mean squared error, these central variables are the expected label and expected prediction. For Bregman divergences more generally, these ``average'' labels and predictions are the minimizers of the expected Bregman divergence. 
\begin{definition}[Central label]
\label{def:central-label}
  Let $Y$ be a random variable over $\mathcal X$ (intuitively, the label). We call \emph{central label} the unique minimizer $\argmin_{z \in \mathcal X} \E D[Y \| z]$.
\end{definition}
\begin{definition}[Central prediction]
  Let $X$ be a random variable over $\mathcal X$ (intuitively, the prediction). We call \emph{central prediction} the unique minimizer $\argmin_{z \in \mathcal X} \E D[z \| X]$.
\end{definition}

\begin{proposition}[{\citet{bregman-clustering}\!\!}
 \label{prop:mean}]
 The central label satisfies $\argmin_{z \in \mathcal X} \E D[Y \| z] = \E Y$.
\end{proposition}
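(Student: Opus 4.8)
The plan is to avoid differentiating the objective and instead exhibit an exact decomposition of $\E D_F[Y \| z]$ as a $z$-independent constant plus a single Bregman divergence that is manifestly minimized at $z = \E Y$. Concretely, I would prove the identity
\[
\E D_F[Y \| z] \;=\; \E D_F[Y \| \E Y] \;+\; D_F[\E Y \| z],
\]
which is the Bregman analogue of the classical variance-plus-squared-bias decomposition, and then read off the minimizer from nonnegativity and strict positivity of $D_F$.

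To establish the identity, I would first expand the left-hand side directly from \cref{eq:bd}: since $z$ is fixed and expectation is linear, $\E[\nabla F(z)^\top (Y - z)] = \nabla F(z)^\top(\E Y - z)$, so that
\[
\E D_F[Y \| z] \;=\; \E[F(Y)] - F(z) - \nabla F(z)^\top(\E Y - z).
\]
I would then add and subtract $F(\E Y)$ and regroup the terms. The block $F(\E Y) - F(z) - \nabla F(z)^\top(\E Y - z)$ is exactly $D_F[\E Y \| z]$ by definition, while the remaining block $\E[F(Y)] - F(\E Y)$ equals $\E D_F[Y \| \E Y]$, because the linear correction term in $D_F[Y \| \E Y]$ is $\nabla F(\E Y)^\top(Y - \E Y)$, whose expectation vanishes when evaluated at the mean. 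This yields the displayed identity.

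Finally, I would conclude: the first summand $\E D_F[Y \| \E Y]$ does not depend on $z$, and the second summand $D_F[\E Y \| z]$ is nonnegative, vanishing if and only if its two arguments coincide. Strict convexity of $F$ upgrades this to a strict statement — $D_F[\E Y \| z] = 0 \iff z = \E Y$ — so the minimizer exists, is unique, and equals $\E Y$. The main subtlety to address is not the computation (which is routine telescoping) but the choice to compare against $\E Y$ up front rather than solve a stationarity condition; this is what keeps the argument valid under mere strict convexity and differentiability, with no appeal to second derivatives. A minor point worth noting is that $\E Y \in \mathcal X$, so that the candidate minimizer is feasible, which follows from $\mathcal X$ being closed and convex.
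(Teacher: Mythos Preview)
Your proof is correct. The paper does not supply its own proof of this proposition: it is stated with attribution to \citet{bregman-clustering} and used as a black box. Your argument is essentially the classical one from that reference---the identity $\E D_F[Y\|z] = \E D_F[Y\|\E Y] + D_F[\E Y\|z]$ obtained by telescoping and using $\E[\nabla F(\E Y)^\top(Y-\E Y)]=0$, followed by the observation that the second summand is uniquely minimized at $z=\E Y$ by strict convexity. Your remark that $\E Y\in\mathcal X$ by convexity (and closedness, if one worries about boundary points) is the right feasibility check; nothing is missing.
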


By analogy to \cref{prop:mean}, and for reasons that will appear clear momentarily, we will refer to the minimizer $z = \argmin_z \E D[z \| X]$ as the \emph{dual mean}, and write it $\dm X$.

We can now write out the bias-variance decomposition for any Bregman divergence $D$~\citep{generalized_bvd}:
\begin{align}
    \E D[Y\|X] =&~ \E D[Y\|\E Y] && \textit{(Bayes error)} \notag \\
    &+ D[\E Y\| \dm X] && \textit{(bias)} \label{eq:bv} \\
    &+ \E D[\dm X\|X].&& \textit{(model variance)} \notag
\end{align}


Because Bregman divergences are not guaranteed to be symmetric, \cref{eq:bv} takes a more complicated form than the mean squared error decomposition: central label and central prediction are computed differently, and the ordering of terms within the bias and variances is now important.

The main obstacle in bias-variance decompositions for Bregman divergences lies in the form of the central prediction, $\argmin_z \E \bd z X$; the central label remains easily interpretable as $\argmin \E \bd Y z = \E Y$.

Our first contribution resolves this difficulty via a simple observation: the central prediction $\dm X$ has a straightforward intepretation when we leverage the dual space defined by the convex conjugate of $F$.
\begin{proposition}[Dual mean]
\label{prop:dual-mean}
  The dual mean $\dm X$ is the primal projection of the mean of $X$ in dual space: \[\dm X = (\E X^*)^*.\]
\end{proposition}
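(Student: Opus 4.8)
The plan is to reduce the characterization of the dual mean to the already-established central-label formula (\cref{prop:mean}) by passing through the conjugate Bregman divergence. The key identity is \cref{prop:bregman-dual}, which lets me rewrite the objective defining $\dm X$ entirely in dual coordinates, at which point the minimization becomes a plain central-label problem in dual space.

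First I would apply \cref{prop:bregman-dual} pointwise inside the expectation: for each realization of $X$ we have $D_F\bdx z X = D_{F^*}\bdx{X^*}{z^*}$, and hence
\[\E D_F\bdx z X = \E D_{F^*}\bdx{X^*}{z^*}.\]
Since $F$ is strictly convex and differentiable, $\nabla F$ is a bijection from $\mathcal X$ onto its image, so minimizing over $z \in \mathcal X$ is equivalent to minimizing over the dual variable $w = z^* = \nabla F(z)$. The problem thus becomes $\argmin_w \E D_{F^*}\bdx{X^*} w$, which is exactly the central-label problem for the random variable $X^*$ under the Bregman divergence $D_{F^*}$.

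Next I would invoke \cref{prop:mean} in the dual space: its minimizer is the plain mean, giving $w = \E X^*$, i.e. $z^* = \E X^*$. Applying the involution $z = (z^*)^* = \nabla F^*(\nabla F(z))$ from the preliminaries then yields $\dm X = (\E X^*)^*$, as claimed. As a sanity check and alternative route, I could instead differentiate $\E D_F\bdx z X$ directly: only the terms $F(z) - \nabla F(X)^\top z$ depend on $z$, so the first-order condition reads $\nabla F(z) = \E \nabla F(X) = \E X^*$, and strict convexity guarantees this stationary point is the unique global minimizer, recovering the same answer.

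The main obstacle is not the algebra but a regularity/domain issue: I must ensure that $\E X^*$ lies in the image $\nabla F(\mathcal X)$ (equivalently, in the domain of $\nabla F^*$), so that the dual projection $(\E X^*)^*$ is well-defined and lands back in $\mathcal X$. This is precisely what makes the substitution $w \mapsto \E X^*$ admissible and guarantees the minimizer is attained rather than escaping to a boundary. I would either take this as a standing assumption — it is implicit in the surrounding text, which already posits a \emph{unique} minimizer $\dm X \in \mathcal X$ — or verify it directly for the specific convex $F$ at hand, noting that for the losses of interest the dual coordinates are well-behaved enough that $\E X^*$ stays in the range of $\nabla F$.
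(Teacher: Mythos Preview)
Your proof is correct and follows essentially the same route as the paper: apply \cref{prop:bregman-dual} to rewrite the objective in dual coordinates, then invoke \cref{prop:mean} for $D_{F^*}$ to identify the minimizer as $\E X^*$, and pull back via the involution. The paper compresses this into a single line and does not dwell on the domain regularity you flag; your alternative first-order-condition argument is also mentioned by the paper in a footnote as the characterization due to \citet{generalized_bvd}.
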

\begin{proof}
By simple application of \cref{prop:bregman-dual}.
\begin{align*}
    \argmin_{z \in \mathcal X} \E D_F\bdx z X &= \argmin_{z \in \mathcal X} \E D_{F^*} \bdx {X^*} {z^*} = \Big(\argmin_{z^* \in \mathcal X^*} D_{F^*}\bdx {X^*} {z^*}\Big)^* = (\E X^*)^*. \tag*{\qedhere}
\end{align*}
\end{proof} 
This reformulation is crucial to our analysis, and, to the extent of our knowledge, novel.\footnote{\citet{generalized_bvd} showed the characterization $z = \argmin_z \E D[z \| X]$ satisfies $\nabla F(z) = \E[\nabla F(X)]$, which also entails Prop.~\ref{prop:dual-mean}.} 
\begin{remark}
  For $F(x) = \|x\|_2^2$, $D_F$ is the MSE, and $\dm X = \E X$. More generally, $\E X = \dm X$ if $D_F$ is symmetric. 
\end{remark}
\begin{remark}
  When $F$ is the negative entropy over the probability simplex, $D_F$ is the KL divergence, and \[\dm X = \softmax (\E \log X).\]
\end{remark}
\begin{remark}
  \label{rmk:diversity}
  Reordering \cref{eq:bv} for the KL divergence, we can write the loss of an ensemble of models averaged in log-probability space as the average loss of each model minus the variance term, thus recovering the ensemble diversity regularizer from~\citep{webb2020ensemble}.
\end{remark}

\subsection{Laws of total expectations and variance}
Despite its more general form, the model variance $\V X = D[\dm X \| X]$ in \cref{eq:bv} satisfies fundamental properties associated with the standard variance $\E (X - \E X)^2$. In particular, one can easily verify that $\V X \ge 0$, and that $\V X = 0$ if and only if $X$ is almost surely constant.

We next show that $\V X$ also follows a generalization of the law of total variance. This law disentangles the effect of different sources of randomness, and is thus a fundamantal tool in model analysis. Given two variables $X$ and $Z$, the law of total variance decomposes the standard (Euclidean) variance as $\V X = \E[\V (X \mid Z)] + \V [\E(X \mid Z)]$: the variance of $X$ is the sum of the variances respectively \emph{unexplained} and \emph{explained} by $Z$.  

We begin by showing that the dual mean satisfies its own form of the law of total expectation; proving this is straightforward with the reparameterization of \cref{prop:dual-mean}.
\begin{restatable}{lemma}{iteratedexpectations}
  \label{prop:iterated-expectations}
  Let $X,Z$ be random variables on $\mathcal X$. Then $\dm X = \dm_Z [\dm [X | Z]]$, where $\dm [X \mid Z] \deq (\E_{X | Z} [X^*])^*$.
\end{restatable}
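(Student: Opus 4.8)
The plan is to reduce everything to the ordinary law of total expectation (the tower property) by working in the dual space, exactly as \cref{prop:dual-mean} suggests. The key observation is that the dual-mean operator $\dm$ is built from three steps: dualize each point, take an \emph{ordinary} expectation in dual space, then map back via the primal projection. Since the ordinary expectation already obeys the tower property, the only thing I need to check is that the dual/primal maps compose cleanly through the inner conditional operator.

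Concretely, I would first unfold the inner term. Writing $W \deq \dm[X\mid Z] = (\E_{X\mid Z}[X^*])^*$, I note that $W$ is a random variable that is a function of $Z$. Applying the duality map to both sides and using the involution $(y^*)^* = y$ from the preliminaries gives $W^* = \E_{X\mid Z}[X^*]$, so the dual of the inner central prediction is simply the conditional expectation of $X^*$.

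Next I would apply \cref{prop:dual-mean} to the outer operator, now acting on the random variable $W$ over $Z$: by definition $\dm_Z[W] = (\E_Z[W^*])^*$. Substituting the previous identity yields
\[
  \dm_Z\big[\dm[X\mid Z]\big] = \big(\E_Z\big[\E_{X\mid Z}[X^*]\big]\big)^*.
\]
At this point the only remaining ingredient is the standard law of total expectation applied to the (vector-valued) random variable $X^*$, namely $\E_Z[\E_{X\mid Z}[X^*]] = \E[X^*]$. This holds coordinatewise and requires no convexity. Combining it with \cref{prop:dual-mean} once more gives $(\E[X^*])^* = \dm X$, which is exactly the claim.

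I do not expect any genuine obstacle here: all of the difficulty of the generalized decomposition has already been absorbed into the reparameterization of \cref{prop:dual-mean}, which linearizes the central prediction in dual coordinates. The one point that deserves a sentence of care is the well-definedness and measurability of $W = \dm[X\mid Z]$ as a function of $Z$ — i.e.\ that $\E_{X\mid Z}[X^*]$ lands in the dual set $\mathcal X^*$ so that its primal projection exists — but this follows from $\mathcal X^*$ being convex and the conditional expectation being an average of points of the form $X^*$. Once that is noted, the proof is a two-line application of the tower property sandwiched between dualizations.
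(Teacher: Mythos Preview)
Your proposal is correct and matches the paper's proof essentially line for line: unfold the inner dual mean, cancel the double dual, apply the ordinary tower property to $X^*$, and invoke \cref{prop:dual-mean} once more. The paper omits your measurability remark but otherwise gives exactly this two-line computation.
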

\begin{proof} The proof follows directly from the (standard) law of total expectation and \cref{prop:dual-mean}.
\[\dm_Z (\dm_{X|Z} X) = \dm_Z\Big[(\E_{X|Z}X^*)^*\Big] = (\E_Z \E_{X|Z} X^*)^* = (\E X^*)^* = \dm X. \tag*{\qedhere}\]
\end{proof}
With \cref{prop:iterated-expectations} in hand, we can easily show a generalized form of the law of total variance, which simply accounts for the definition of dual mean and generalized variance.
\begin{restatable}{lemma}{dualtotalvariance}
\label{lem:tv_2}
Let $X, Z$ be random variables over $\mathcal X$. The variance $\V X \deq \E D[\dm X \| X]$ satisfies a generalized law of total variance.
  \[\V[X] =  \E[\V [X | Z]] + \V[\dm [X|Z]].\]
\end{restatable}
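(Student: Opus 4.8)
The plan is to reduce the statement to a single conditional application of the bias–variance decomposition in \cref{eq:bv}. The key observation is that \cref{eq:bv}, specialized to a \emph{deterministic} label $z \in \mathcal X$ (so that the Bayes error term vanishes and $\E Y = z$), collapses into a generalized Pythagorean identity
\[\E \bd z W = \bd z {\dm W} + \V W,\]
valid for any random variable $W$ on $\mathcal X$ and any fixed $z \in \mathcal X$, where $\V W \deq \E\bd{\dm W}{W}$. (Equivalently, one obtains this in one line by expanding both Bregman divergences through \cref{eq:bd} and invoking the defining property $\nabla F(\dm W) = \E[\nabla F(W)]$ of \cref{prop:dual-mean}: the $F(W)$ and $F(\dm W)$ contributions cancel, and the remaining inner products against $\nabla F(W)$ collapse because $z$ and $\dm W$ are deterministic, so they can be pulled out of the expectation.)

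First I would apply this identity \emph{conditionally on $Z$}, taking $W = (X \mid Z)$ and the deterministic target $z = \dm X$ (a fixed element of $\mathcal X$, hence admissible). For each value of $Z$ this yields
\[\E_{X \mid Z} \bd{\dm X}{X} = \bd{\dm X}{\dm[X\mid Z]} + \V[X \mid Z],\]
where $\V[X\mid Z] \deq \E_{X\mid Z}\bd{\dm[X\mid Z]}{X}$ is the generalized variance of the conditional law of $X$ given $Z$. I would then take $\E_Z$ of both sides; by the tower property the left-hand side becomes $\E\bd{\dm X}{X} = \V X$, while the last term becomes $\E_Z\,\V[X\mid Z] = \E[\V[X\mid Z]]$.

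The only remaining step is to recognize the surviving cross term $\E_Z \bd{\dm X}{\dm[X \mid Z]}$ as the variance $\V[\dm[X\mid Z]]$ of the random variable $\dm[X\mid Z]$ (viewed as a function of $Z$). This is exactly where \cref{prop:iterated-expectations} enters: since the dual mean of $\dm[X\mid Z]$ over $Z$ is $\dm_Z[\dm[X\mid Z]] = \dm X$, the definition of the generalized variance gives $\V[\dm[X\mid Z]] = \E_Z \bd{\dm_Z[\dm[X\mid Z]]}{\dm[X\mid Z]} = \E_Z\bd{\dm X}{\dm[X\mid Z]}$. Substituting yields $\V X = \E[\V[X\mid Z]] + \V[\dm[X\mid Z]]$, as claimed.

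The one step carrying genuine content is the Pythagorean identity of the first paragraph; everything else is bookkeeping with the tower property and the definitions. I expect the main subtlety to be purely that this identity requires the target $z$ to be deterministic, so that the factor $z - \dm W$ survives outside the expectation against $\nabla F(W)$. This is precisely why it matters that, once $Z$ is fixed, both $\dm X$ and $\dm[X\mid Z]$ are constants rather than random quantities; granting this, no integrability or convexity issue arises beyond those already assumed for $F$.
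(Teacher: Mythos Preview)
Your proof is correct and follows essentially the same route the paper indicates: specialize the decomposition \eqref{eq:bv} to a deterministic target to obtain the Pythagorean identity, apply it conditionally on $Z$ with target $\dm X$, take $\E_Z$, and invoke \cref{prop:iterated-expectations} to recognize the cross term as $\V[\dm[X\mid Z]]$. The paper does not spell out a proof of \cref{lem:tv_2} beyond remarking that it follows easily from \cref{prop:iterated-expectations} and the definitions, but the proof of the closely related \cref{prop:conditional} in the appendix proceeds by exactly the same mechanism you use here.
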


We cannot overstate the importance of \cref{lem:tv_2}, which is key to disentangling sources of randomness in ML algorithms~\citep{neal2019modern,adlam2020}.



\section{Conditional and bootstrapped estimates}
\begin{figure*}[t]
\centering
\includegraphics[width=.6\textwidth]{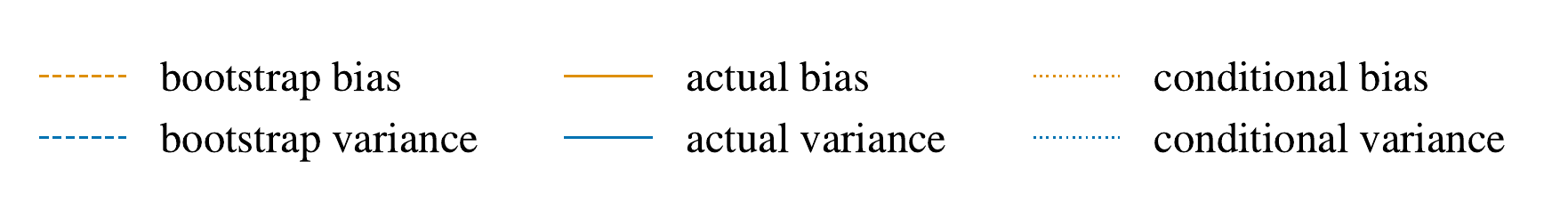}\\
\begin{subfigure}{.32\textwidth}
\vspace{-1em}
\includegraphics[height=1.6in]{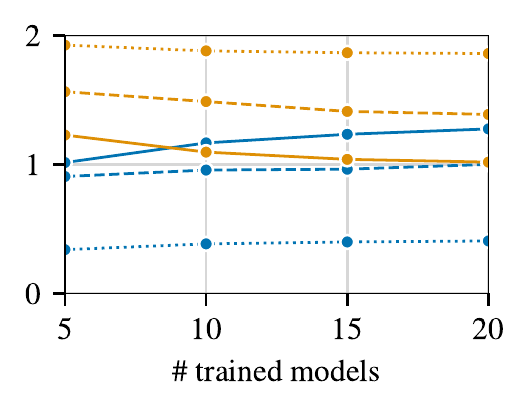}
\caption{CIFAR-10, 50 partitions}
\label{fig:results-bv-bootstrap-partition-cifar}
\end{subfigure}
\begin{subfigure}{.33\textwidth}
\vspace{-1em}
\includegraphics[height=1.6in]{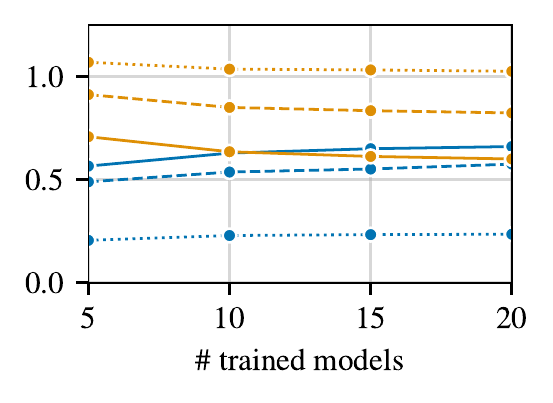}
\caption{CIFAR-10, 20 partitions}
\label{fig:results-bv-bootstrap-partition-cifar-20}
\end{subfigure}
\begin{subfigure}{.33\textwidth}
\vspace{-1em}
\includegraphics[height=1.6in]{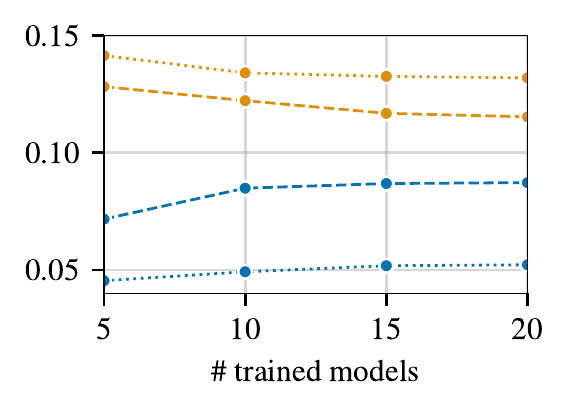}
\caption{CIFAR-10, no partitions}
\label{fig:results-bv-bootstrap-full-cifar}
\end{subfigure}
\caption{\small Conditional and bootstrapped estimates of the bias and variance on variations of the CIFAR-10 dataset. The approximations of the CIFAR-10 dataset in figures (a) and (b) allow us to compute the true bias and variance; in both cases, the bootstrap estimates are more accurate than the conditional estimates. Figure (c) shows the bootstrap and conditional estimates on the true CIFAR-10 dataset, for which the true estimate cannot be computed.}
\label{fig:results-bv-bootstrap}
\end{figure*}
The bias-variance tradeoff of \cref{eq:bv} applies to any source of randomness. This includes the dependency on the random seed chosen to train the model, as well as the randomness in the training data $T$. However, although we can easily draw a new random seed and retrain any model, sampling a true new training set is often more difficult, if not completely impossible. As deep models are notoriously data hungry, it is often optimal to train on all the training points available: we only get one draw of the training set.

When there is randomness that cannot be controlled for, the empirical estimates of the bias and variance in \cref{eq:bv} will necessarily be only approximate. If we only get one draw of the training set, for example, our estimates will be \emph{conditioned} on the available training data. The following proposition quantifies to which extent these conditional estimates depart from their unconditional equivalents.

\begin{restatable}{proposition}{propconditionalsecond}
\label{prop:conditional}
  Let $X, Z$ be two random variables. Applying \cref{eq:bv} to $X | Z$ then taking the expectation over $Z$ yields an alternate decomposition of the expected Bregman divergence:
    \[\E D[y \| X] = \underbrace{\E_Z D[y \| \dm(X|Z)]}_{\textup{Conditional bias: Bias}_Z} + \underbrace{\E_{Z}\E_{X|Z}\Big[D[\dm (X| Z) \| X]\Big|Z\Big]}_{\textup{Conditional variance: Var}_Z}.\]
  The conditional bias (resp. variance) overestimates (resp. underestimates) their respective total values by the fixed quantity $\E_Z D[\dm X \| \dm (X|Z)]$:
    \[\textup{Bias}_Z = \textup{total bias} + \E_Z D[\dm X \| \dm (X|Z)] \qquad \qquad  \textup{Var}_Z = \textup{total variance} - \E_Z D[\dm X \| \dm (X|Z)].\]
\end{restatable}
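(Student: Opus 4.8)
The plan is to derive the displayed decomposition by applying \cref{eq:bv} inside the conditional expectation $\E_{X\mid Z}$ and then averaging over $Z$, and to obtain the correction term from the three-point (generalized Pythagorean) identity for Bregman divergences, whose linear residual will vanish precisely because of the dual law of total expectation in \cref{prop:iterated-expectations}.

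First I would establish the first equation. Since $y$ is deterministic, applying \cref{eq:bv} to the conditional law of $X$ given $Z$ kills the Bayes-error term and yields
\[\E_{X\mid Z} D[y \| X] = D[y \| \dm(X\mid Z)] + \E_{X\mid Z}\big[D[\dm(X\mid Z) \| X]\big].\]
Taking $\E_Z$ of both sides and applying the ordinary law of total expectation on the left, $\E_Z \E_{X\mid Z} D[y\|X] = \E D[y\|X]$, gives exactly $\E D[y\|X] = \mathrm{Bias}_Z + \mathrm{Var}_Z$. Applying \cref{eq:bv} unconditionally (again with $y$ deterministic, so no Bayes term) identifies $\text{total bias} = D[y\|\dm X]$ and $\text{total variance} = \E D[\dm X\|X]$.

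Next I would relate $\mathrm{Bias}_Z$ to the total bias. Because $\mathrm{Bias}_Z + \mathrm{Var}_Z$ and $\text{total bias} + \text{total variance}$ both equal $\E D[y\|X]$, it suffices to prove the bias identity; the variance identity then follows by subtraction. Expanding the three Bregman divergences via \cref{eq:bd} gives the three-point identity
\[D[y \| \dm(X\mid Z)] = D[y\|\dm X] + D[\dm X \| \dm(X\mid Z)] + \big(\nabla F(\dm X) - \nabla F(\dm(X\mid Z))\big)^\top (y - \dm X).\]
Now take $\E_Z$. The terms $D[y\|\dm X]$ and $y - \dm X$ are constant in $Z$, so the only work is to show the linear residual averages to zero, i.e. $\E_Z \nabla F(\dm(X\mid Z)) = \nabla F(\dm X)$. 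This is exactly the dual law of total expectation: by \cref{prop:dual-mean} and its conditional form one has $\nabla F(\dm(X\mid Z)) = \E_{X\mid Z} X^*$ and $\nabla F(\dm X) = \E X^*$, whence $\E_Z \nabla F(\dm(X\mid Z)) = \E_Z \E_{X\mid Z} X^* = \E X^* = \nabla F(\dm X)$ (equivalently, this is \cref{prop:iterated-expectations} read in dual space). Hence $\E_Z D[y\|\dm(X\mid Z)] = D[y\|\dm X] + \E_Z D[\dm X\|\dm(X\mid Z)]$, which is the claimed overestimation of the bias, and the variance identity follows at once.

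The only genuinely non-routine step is recognizing that the linear residual of the three-point identity vanishes in expectation, and that this is not a coincidence but exactly the statement that the conditional dual means $\dm(X\mid Z)$ average, in dual space, to the unconditional dual mean $\dm X$. Everything else — the vanishing of the Bayes term, the tower property on the left-hand side, and the subtraction step for the variance — is bookkeeping. I would also remark that the correction $\E_Z D[\dm X \| \dm(X\mid Z)]$ is automatically nonnegative, being an expectation of Bregman divergences, which is what licenses the words \emph{overestimates} and \emph{underestimates}.
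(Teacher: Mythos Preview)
Your proof is correct and rests on the same key ingredient as the paper's --- the law of iterated dual expectations (\cref{prop:iterated-expectations}) --- but the paper handles the bias identity more compactly. Rather than expanding $D[y\|\dm(X\mid Z)]$ via the three-point identity and then arguing that the linear residual $\big(\nabla F(\dm X)-\nabla F(\dm(X\mid Z))\big)^\top(y-\dm X)$ vanishes in expectation, the paper simply reapplies \cref{eq:bv} itself to the random variable $\dm(X\mid Z)$ (random in $Z$, with deterministic label $y$), obtaining
\[\E_Z D[y\|\dm(X\mid Z)] = D\big[y\,\big\|\,\dm_Z[\dm(X\mid Z)]\big] + \E_Z D\big[\dm_Z[\dm(X\mid Z)]\,\big\|\,\dm(X\mid Z)\big],\]
and then invokes \cref{prop:iterated-expectations} to collapse $\dm_Z[\dm(X\mid Z)]$ to $\dm X$. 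The variance identity then follows by subtraction, exactly as you do. Your route makes the mechanism (cancellation of the cross term in dual coordinates) more explicit; the paper's route is slicker because it never leaves the level of the decomposition \cref{eq:bv}. Either way, the substantive step is the same.
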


\begin{remark}
The quantity $ \E_Z D[\dm X \| \dm (X|Z)]$ is non-negative, and equal to 0 if $X$ and $Z$ are independent.
\end{remark}

A alternative to the conditional estimates instead partitions the entire training set into disjoint subsets. Models are trained on these smaller subsets, which act as different draws of the training distribution; this approach yields an unbiased estimator that is consistent as the number of partitions goes to infinity.\footnote{See \cref{fig:partitions} in App.~\ref{app:partition} for an empirical analysis of how many partitions are necessary for  estimates to converge.} However,, partitioning raises the difficult question of how (and if) the model itself should be also modified: should the width or number of layers be reduced to accommodate the smaller training set? Furthermore, this approach is by construction incapable of estimating the bias and variance of the models on the full training set.

A second option lies in classical bootstrapping~\citep{efron1994introduction, edgeworth}: we create new datasets by sampling with replacement from the original dataset, and use these samples to estimate the bias and variance. Repeating this procedure by \emph{re}sampling from the bootstrap samples in turn estimates the quantity required to correct~\citep{edgeworth} the bootstrapped estimate (\cref{alg:bootstrap,fig:bootstrap}). This approach does not require reducing the size of the training set, but bootstrapped samples will contain duplicate points.

To compare the conditional and bootstrapped estimates, we trained wide WRNs with the cross-entropy loss on different partitionings of the CIFAR-10 dataset. \cref{fig:results-bv-bootstrap-partition-cifar} uses 50 partitions of 1k training points, and \cref{fig:results-bv-bootstrap-partition-cifar-20} uses 20 partitions of 2.5k training points. These partitions allow us to also compute the true bias and variance of the algorithm.\footnote{We insist here that these are the bias and variance of the algorithm trained on respectively 1k and 2.5k points, rather than the bias and variance of the algorithm trained on the entire dataset.} 

In both cases, we see that the bootstrapped estimates are significantly more accurate than the conditional samples, but both estimation methods systematically underestimate the variance and overestimate the bias. Additionally, the boostrap bias is more accurate than the bootstrap variance; for this reason, practitioners may prefer to estimate the variance as the total error minus the bootstrapped bias.

Finally,
\cref{fig:results-bv-bootstrap-full-cifar} shows the bootstrap and conditional estimates on the full dataset, for which we cannot compute the actual bias and variance. Since as we decrease the number of partitions, the gap between bias and variance widens, it is likely that the true bias dominates the variance in the non-partitioned regime (\cref{fig:results-bv-bootstrap-full-cifar} and \cref{app:partition}).

\begin{figure}
\begin{minipage}{.47\textwidth}
\vspace{1em}
\begin{tikzpicture}[nodes={draw, circle}, sibling distance=.8cm,->]
\node{$T$}    
    child { node {$\scriptstyle T_1$} 
        child { node {$\scriptscriptstyle T_{11}$} }
        child { node {$\scriptscriptstyle T_{12}$} }
        child { node {$\scriptscriptstyle T_{11}$} }
    }
    child [missing]
    child [missing]
    child {node {$\scriptstyle T_2$} 
        child { node {$\scriptscriptstyle T_{21}$} }
        child { node {$\scriptscriptstyle T_{22}$} }
        child { node {$\scriptscriptstyle T_{11}$} }
    }
    child [missing]
    child [missing]
    child {node {$\scriptstyle T_3$} 
        child { node {$\scriptscriptstyle T_{31}$} }
        child { node {$\scriptscriptstyle T_{32}$} }
        child { node {$\scriptscriptstyle T_{33}$} }
    };
\end{tikzpicture}
\caption{\small Generating bootstrap samples from a dataset $D$. The child of every node is sampled with replacement from its parent so that all datasets have the same size. \vspace{-1em}}
\label{fig:bootstrap}
\end{minipage}~~~
\begin{minipage}{.5\textwidth}
\begin{algorithm}[H]
    \caption{Bootstrap estimate of bias (or variance)}
    \begin{algorithmic}
    \small
        \State {\bfseries Input:} Training set $T$, number of bootstrap samples $B$
        \For{$i \in \{1, \ldots, B\}$}
            \State $T_i \leftarrow \texttt{uniform\_sample}(T)$  \Comment{{\color{gray}Of size $|T|$}}
            \For{$j \in \{1, \ldots, B\}$}
                \State $T_{ij} \leftarrow \texttt{uniform\_sample}(T_i)$   \Comment{{\color{gray}Of size $|T|$}}
            \EndFor
            \State $b^{(2)}_i \leftarrow \textup{bias}(\{T_{ij}\}_j)$ \Comment{{\color{gray}Bootstrap estimate for $T_i$}}
        \EndFor
        \State $b^{(1)} \leftarrow \textup{bias}(\{T_i\}_i)$ \Comment{{\color{gray}Bootstrap estimate for $T$}}
        \State $b^{(2)} \leftarrow \frac 1B \sum_i b^{(2)}_i$
        \State $t \leftarrow b^{(1)} / b^{(2)}$ \Comment{{\color{gray}Corrective term}}
        \State $b^{(0)} \leftarrow t b^{(1)}$
    \State {\bfseries return} Bias estimate $b^{(0)}$.
    \end{algorithmic}
    \label{alg:bootstrap}
\end{algorithm}
\end{minipage}%
\end{figure}

Often, however, we care directly about the conditional estimates. This can be because bootstrapped estimates are computationally intensive, requiring training $\mathcal O(B^2)$ models where each model itself may be an ensemble. 

Alternatively, we simply might care about a training algorithm's performance on a specific training set; This is the case for standard benchmarking suites and many real-world applications where the training set cannot be modified, even when the test distribution itself shifts over time.

\section{Ensembles and the BV decomposition}
We conclude our theoretical analysis by considering ensembles within the context of the bias-variance decomposition for non-symmetric losses. 

\subsection{Primal ensembling}
Most often in deep learning, ensembling simply averages the outputs of $n$ models that differ in their initialization~\citep{deep-ensembles}. This ensembling is commonly motivated by the desire to reduce the variance of the predictive model. Indeed, for the MSE, we know that ensembling by averaging the outputs of models drawn in \iid fashion (a) reduces the variance, and (b) conserves the bias. 

We begin by recovering (a) under some additional weak convexity assumptions on the Bregman divergence, but (b) will prove impossible in the general case.

\begin{restatable}{proposition}{propvanillavariance}
  \label{prop:vanilla-variance}
  Let $D$ be a Bregman divergence that is \emph{jointly} convex in both variables. Let $X_1, \ldots, X_n$ be $n$ \iid random variables drawn from some unknown distribution, and define $\hat X = \frac 1n \sum_i X_i$. Then, 
  \[\E D[\dm \hat X \| \hat X] \le \E D[\dm X \| X].\]
\end{restatable}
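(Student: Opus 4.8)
The plan is to \emph{avoid computing} $\dm \hat X$ explicitly — this is exactly where the nonlinearity bites, since the dual map does not commute with averaging, so $(\E \hat X^*)^*$ bears no simple relation to $\dm X$ — and instead to exploit the variational characterization of the dual mean as a minimizer. Concretely, since $\dm \hat X = \argmin_{z} \E D[z \| \hat X]$ by the very definition of the central prediction, plugging in the suboptimal point $z = \dm X$ immediately gives the upper bound
\[\E D[\dm \hat X \| \hat X] \;=\; \min_{z} \E D[z \| \hat X] \;\le\; \E D[\dm X \| \hat X].\]
This trades the intractable ensemble dual mean for the fixed \emph{deterministic} constant $\dm X$, reducing the problem to comparing $\E D[\dm X \| \hat X]$ against $\E D[\dm X \| X]$.

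For that comparison I would invoke the joint convexity hypothesis, which in particular makes the map $x \mapsto D[\dm X \| x]$ convex in its second argument. Applying Jensen's inequality to the finite average $\hat X = \frac 1n \sum_{i=1}^n X_i$ yields, pointwise in the sample,
\[D[\dm X \| \hat X] \;\le\; \frac 1n \sum_{i=1}^n D[\dm X \| X_i].\]
Taking expectations and using that the $X_i$ are identically distributed — so that each summand equals $\E D[\dm X \| X]$ — collapses the right-hand side to $\E D[\dm X \| X] = \V X$. Chaining this with the previous display gives $\E D[\dm \hat X \| \hat X] \le \E D[\dm X \| X]$, which is the claim.

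The only genuine subtlety is the first step: the natural instinct is to track how averaging acts on the dual mean, but since duality does not commute with averaging this route is a dead end, and the variational bound sidesteps it cleanly. I would also flag two points worth a remark. First, independence is never actually used; identical marginals alone suffice for the averaging step, so the i.i.d.\ hypothesis is stronger than needed. Second, the argument uses joint convexity \emph{only} through convexity in the second slot — precisely the property that general Bregman divergences lack and that the hypothesis restores — which is why (b), conservation of the bias, cannot be recovered by the same device and genuinely fails in general.
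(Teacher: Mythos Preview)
Your proof is correct and is essentially the paper's own argument, differing only in the order of presentation: the paper first bounds $D[z\|\hat X]\le \frac1n\sum_i D[z\|X_i]$ for arbitrary $z$ via joint convexity, takes expectations, and then minimizes both sides over $z$, whereas you first invoke the variational characterization to replace $\dm\hat X$ by $\dm X$ and then apply the convexity bound at that specific point. Your side remarks (that independence is unused and that only second-argument convexity is needed) are accurate and not stated explicitly in the paper.
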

Both KL divergence and mean squared error are jointly convex, and are special cases of \cref{prop:vanilla-variance}. 

However, conserving the bias under ensembling requires that the Bregman divergence be symmetric --- ensuring that $\E$ and $\dm$ are equivalent. In general, ensembling can either decrease \emph{or} increase the bias.
\begin{restatable}{proposition}{propbias}
  \label{prop:bias}
  Let $D$ be the KL divergence. There exists a distribution $\mathcal D$ over predictions $X \in \mathbb R^2$ and a label $y \in \{0, 1\}$ such that the divergence bias $D[y \| \dm[\cdot]]$ satisfies
  \begin{align*}
      D[y \| \dm \hat X] &< D[y \| \dm X] \\
      D[1-y \| \dm \hat X] &> D[1-y \| \dm X],
  \end{align*}
 where as above we define the random variable for ensemble predictions $\hat X = \frac 1n \sum_i X_i$, and by abuse of notation we conflate $y \in \{0, 1\}$ with its one-hot representation.
\end{restatable}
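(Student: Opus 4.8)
The plan is to reduce the pair of inequalities to a single scalar comparison and then exhibit an explicit two-atom distribution that realizes it. Since we work with the KL divergence over two classes, a prediction is a point $X = (p,\,1-p)$ on the simplex, and for a one-hot label $D[(1,0) \| x] = -\log x_1$ while $D[(0,1) \| x] = -\log x_2$. Writing $a \deq (\dm X)_1$ and $b \deq (\dm \hat X)_1$ for the first coordinates of the two central predictions, the first desired inequality (taking $y = (1,0)$) reads $-\log b < -\log a$ and the second reads $-\log(1-b) > -\log(1-a)$; both are equivalent to the single condition $b > a$. It therefore suffices to construct a distribution for which the ensemble's central prediction places strictly more mass on class $1$ than the single-model central prediction. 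Note that this collapse of the two inequalities into one is exactly what conveys the intended message: whichever way $b-a$ moves, ensembling helps one label and hurts the other.

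Second, I would translate $b > a$ into a statement about log-odds using the identity $\dm X = \softmax(\E \log X)$ from the KL remark. The ratio of the two coordinates of $\dm X$ is $\exp\!\big(\E[\log p - \log(1-p)]\big)$, and likewise for $\dm \hat X$ with $p$ replaced by the primal average $\bar p \deq \tfrac1n \sum_i p_i$; since $r \mapsto r/(r+1)$ is increasing, $b > a$ is equivalent to
\[ \E\Big[\log \tfrac{\bar p}{1 - \bar p}\Big] > \E\Big[\log \tfrac{p}{1-p}\Big]. \]
In words, averaging in probability space must \emph{increase} the expected log-odds.

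Third, I would force this by concentrating mass of $X$ near the boundary $p = 0$. Take $n = 2$ and let $X = (\epsilon,\,1-\epsilon)$ and $X = (c,\,1-c)$ each with probability $\tfrac12$, where $c$ is a fixed interior value (say $c = \tfrac12$) and $\epsilon$ is small. On a single draw the boundary atom contributes weight $\tfrac12$ to the divergent term $\log\epsilon$ in $\E\log p$. After primal averaging, however, the event ``both draws equal $\epsilon$'' has probability only $\tfrac14$, while with probability $\tfrac12$ the average is $\tfrac{\epsilon+c}{2}$, bounded away from $0$; meanwhile all the $\log(1-\cdot)$ terms stay $O(1)$. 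A short expansion as $\epsilon \to 0$ then gives $\E[\log \tfrac{\bar p}{1-\bar p}] - \E[\log\tfrac{p}{1-p}] = -\tfrac14 \log\epsilon + O(1) \to +\infty$, so the inequality holds for all sufficiently small $\epsilon$; fixing one such $\epsilon$ completes the construction.

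The main obstacle is conceptual rather than computational: one must identify the mechanism that drives the ensemble's central prediction away from the single-model one in a controlled direction. The dual (geometric-mean) central prediction is extremely sensitive to predictions near the simplex boundary because $\log$ diverges there, whereas the primal average $\bar p$ is not; placing an atom near $p=0$ exploits precisely this mismatch, and everything after that is a routine limit. I would finally remark that the same construction works for any $n \ge 2$ (the weight on the $\log\epsilon$ term drops from $\tfrac12$ to $2^{-n} < \tfrac12$, so the difference still diverges to $+\infty$), and that swapping the roles of the two atoms reverses the sign of $b-a$, confirming that vanilla ensembling can move the bias in either direction.
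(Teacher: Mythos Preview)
Your reduction is exactly the paper's: in two classes the pair of inequalities collapses to the single condition $(\dm \hat X)_1 \neq (\dm X)_1$, after which one simply picks $y$ according to the sign of the difference. Where you diverge is in the witness distribution. The paper takes $\mathcal D$ uniform on $\{(0.8,0.2),(0.6,0.4)\}$ with $n=2$, writes down the induced law of $\hat X$ (masses $\tfrac14,\tfrac14,\tfrac12$ on $(0.8,0.2),(0.6,0.4),(0.7,0.3)$), and settles $\dm\hat X\neq\dm X$ by a one-line numerical check. Your boundary-atom construction with the asymptotic $-\tfrac14\log\epsilon + O(1)$ argument is correct and genuinely different in flavor: it identifies the \emph{mechanism} (the dual mean, being a geometric average, is dragged by near-boundary predictions while the primal average is not), avoids any numerics, and makes the extension to arbitrary $n\ge2$ immediate via the $2^{-n}$ vs.\ $\tfrac12$ comparison you note. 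The paper's route is shorter; yours is more explanatory.
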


\begin{remark}
  Despite \cref{prop:bias}, ensembling reduces the total cross-entropy loss due to Jensen's inequality.
\end{remark}
That vanilla ensembling does not preserve the bias (and can, in fact, increase it!) is a strong departure from what one might naively expect. Given this, it is natural to seek an ensembling method that would maintain the following two behaviors: the variance decreases, and the bias is conserved.

\subsection{Dual ensembling}
Once again, our path forward is guided by the dual expectation $\dm$. To keep the bias unchanged by ensembling, it suffices to ensemble in such a way that the ensemble predictor $\hat X$ satisfies the equality $\dm \hat X = \dm X$.

\begin{restatable}{proposition}{propdualbias}
\label{prop:dual-bias}
  Let $D$ be any Bregman divergence. Let $X_1, \ldots, X_n$ be $n$ \iid random variables drawn from some unknown distribution, and define 
  \[\hat X = \Big(\frac 1n \sum\nolimits_i X_i^*\Big)^*.\] 
  This operation reduces the variance and conserves the bias: for any label $y \in \mathcal X$, we have
  \begin{align*}
      \bd y {\dm \hat X} &= \bd y {\dm X} \\
      \E \bd {\dm \hat X} {\hat X} & \le \E \bd {\dm X} X.
  \end{align*}
\end{restatable}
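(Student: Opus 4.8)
The plan is to prove the two claims separately: the bias conservation comes almost for free from the dual-mean identity, and then the real work goes into the variance reduction, which I would carry out in the conjugate space.

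First I would establish that $\dm \hat X = \dm X$, which immediately gives the bias equality $D[y \| \dm \hat X] = D[y \| \dm X]$ for every label $y$. To see this, note that by construction $\hat X = (\frac 1n \sum_i X_i^*)^*$, so applying the dual map once more and using $(z^*)^* = z$ gives $\hat X^* = \frac 1n \sum_i X_i^*$. Since the $X_i$ are \iid, $\E \hat X^* = \frac 1n \sum_i \E X_i^* = \E X^*$, and \cref{prop:dual-mean} then yields $\dm \hat X = (\E \hat X^*)^* = (\E X^*)^* = \dm X$. This is exactly the point of ensembling in dual space: the dual mean is itself an average of duals, so averaging the duals leaves it unchanged.

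Next, for the variance, write $\mu \deq \dm X = \dm \hat X$ and pass into the conjugate divergence via \cref{prop:bregman-dual}. Since $\mu^* = (\dm X)^* = \E X^*$, I obtain $D_F[\mu \| \hat X] = D_{F^*}[\hat X^* \| \mu^*] = D_{F^*}[\frac 1n \sum_i X_i^* \| \E X^*]$ and likewise $D_F[\mu \| X] = D_{F^*}[X^* \| \E X^*]$. Writing $U_i \deq X_i^*$ with common mean $\bar U \deq \E X^*$ and $\hat U \deq \frac 1n \sum_i U_i$, the target inequality becomes $\E D_{F^*}[\hat U \| \bar U] \le \E D_{F^*}[U \| \bar U]$ — a variance-reduction statement in which all of the randomness now sits in the \emph{first} argument of the Bregman divergence, where the divergence is always convex.

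With the randomness in the first argument, I would use that the linear term of the Bregman divergence vanishes in expectation because $\bar U$ is precisely the mean of both $U$ and $\hat U$. Expanding the definition, $\E D_{F^*}[\hat U \| \bar U] = \E F^*(\hat U) - F^*(\bar U)$ and $\E D_{F^*}[U \| \bar U] = \E F^*(U) - F^*(\bar U)$, so the inequality collapses to $\E F^*(\hat U) \le \E F^*(U)$. Since $F^*$ is convex and $\hat U$ is a uniform convex combination of \iid copies of $U$, Jensen gives $F^*(\hat U) \le \frac 1n \sum_i F^*(U_i)$ pointwise, and taking expectations with the \iid assumption yields $\E F^*(\hat U) \le \frac 1n \sum_i \E F^*(U_i) = \E F^*(U)$, which finishes the argument. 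The only thing to get right — more bookkeeping than genuine obstacle — is tracking which quantities are starred so that the conjugate reformulation lands exactly on $D_{F^*}[\hat U \| \bar U]$ with $\bar U$ equal to the common mean; once that is clean, the proof is elementary and, crucially, needs no joint-convexity hypothesis (unlike \cref{prop:vanilla-variance}), since conjugation is precisely what moves the randomness into the convex argument.
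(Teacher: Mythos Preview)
Your proposal is correct and follows essentially the same approach as the paper: both establish $\dm \hat X = \dm X$ from the dual-mean identity, then pass to the conjugate divergence $D_{F^*}$ so that the randomness sits in the first (convex) argument. The only cosmetic difference is that the paper applies convexity of $D_{F^*}$ in its first argument directly, whereas you expand the Bregman definition, observe the linear term vanishes because $\bar U$ is the common mean, and then apply Jensen to $F^*$ itself---these are two phrasings of the same convexity fact.
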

In contrast to the reduction in variance that occurs for vanilla ensembles, we no longer require that $D$ be jointly convex; the natural convexity of $D$ in its first argument is sufficient. We refer to the operation in \cref{prop:dual-bias} as \emph{dual} ensembling, and to that of \cref{prop:vanilla-variance} as \emph{primal} ensembling.

\begin{remark}
  Under the KL divergence, dual ensembling amounts to averaging in log-probability space (sometimes referred to as geometric averaging):
  \[\hat X = \softmax\Big(\frac 1n \sum\nolimits_i \log(X_i)\Big).\]
\end{remark}

\begin{figure*}[t]
  \centering
  \includegraphics{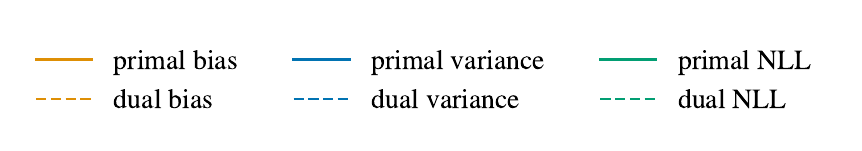}
  \begin{subfigure}{.49\textwidth}
    \vspace{-.5em}
    \centering
    \includegraphics[width=.8\textwidth]{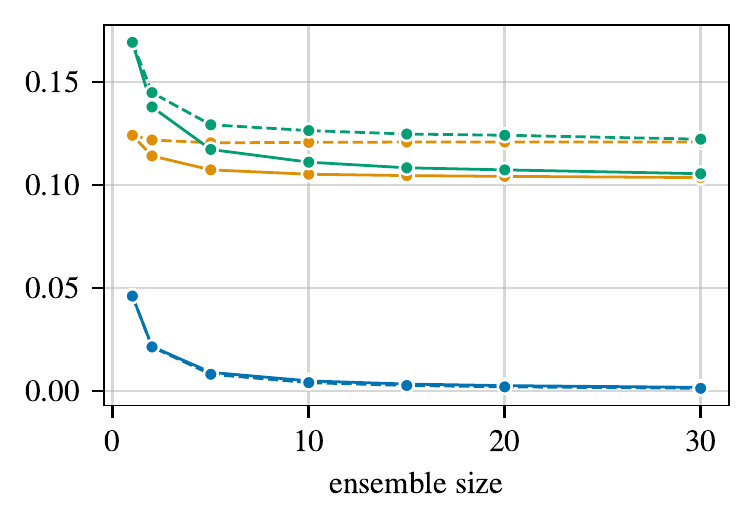}
    \caption{CIFAR-10}
  \end{subfigure}
  \begin{subfigure}{.49\textwidth}
  \centering
    \vspace{-.5em}
    \includegraphics[width=.8\textwidth]{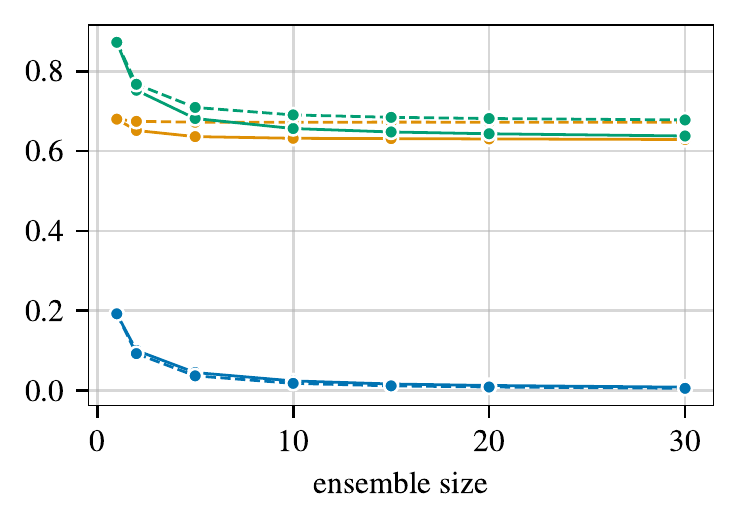}
    \caption{CIFAR-100}
  \end{subfigure}
  \caption{Conditional bias, variance, and NLL of primal and dual WRN-28--10 ensembles. As expected, the dual bias remains essentially constant as a function of the number of ensemble members, while the primal bias changes over time. As both primal and dual variance are reduced similarly by ensembling, the reduction in primal bias lets primal ensembling achieves better overall negative log likelihood on both datasets. The bias, variance and NLL are estimated by drawing 20 different draws of each ensemble size.}
 \label{fig:bv-cifar10}
\end{figure*}

Figure~\ref{fig:bv-cifar10} shows the evolution of the total loss, bias and variance when training ensembles of wide residual networks (WRN)~\citep{wrn} with the cross-entropy loss on CIFAR-10 and CIFAR-100.

As expected, the bias is overall unaffected by the size of the ensemble when dual ensembling, whereas the bias decreases under primal ensembling. For both ensembling methods, the variance reductions are comparable, and so primal ensembling achieves a lower NLL than dual ensembling. Furthermore, under primal ensembling, the decrease in variance is significantly higher than the decrease in bias. 

These results suggest the following hypotheses. Firstly, the empirical bias \emph{dominates} the variance in both experiments. As these are conditional estimates subject to the estimation error described in~\cref{prop:conditional}, we cannot affirm with certainty that the true bias domimates the true variance, although this is a plausible scenario based on the conclusions of \cref{fig:results-bv-bootstrap}. 

Secondly, primal ensembling may be successful in classification in part \emph{because} it affects the bias. Although theory does not guarantee that primal ensembling will reduce the bias, this appears to be the case in practice, thus providing a significant advantage to primal ensembling over over dual ensembling.

The following section analyzes under which conditions primal ensembling is favorable to dual ensembling.

\section{Empirical analysis of modern ensembles}

\begin{figure*}[t]
\centering
\includegraphics[width=.7\textwidth]{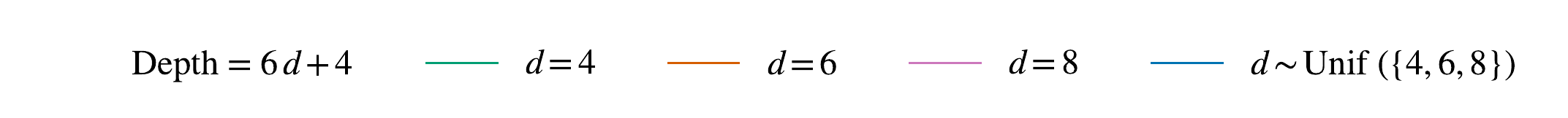}

    \begin{subfigure}{.32\textwidth}
        \centering
        \vspace{-1em}
        \includegraphics[width=\textwidth]{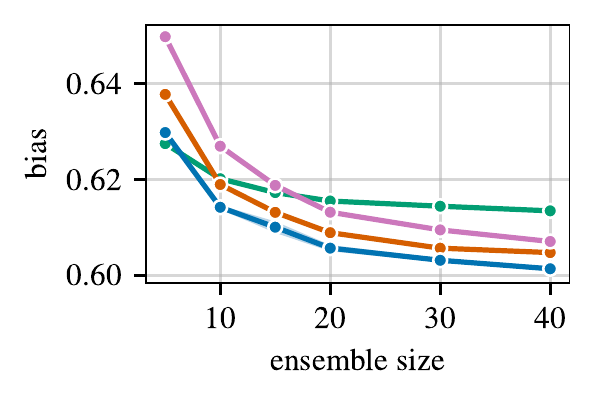}
        \caption{Bias}
        \label{fig:depth-bias}
    \end{subfigure}
    \begin{subfigure}{.32\textwidth}
        \vspace{-1em}       
        \centering
        \includegraphics[width=\textwidth]{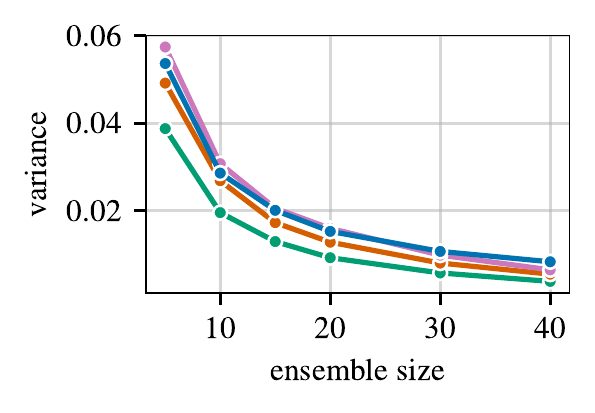}
        \caption{Variance}
        \label{fig:depth-variance}
    \end{subfigure}
    \begin{subfigure}{.32\textwidth}
        \centering
        \vspace{-1em}
        \includegraphics[width=\textwidth]{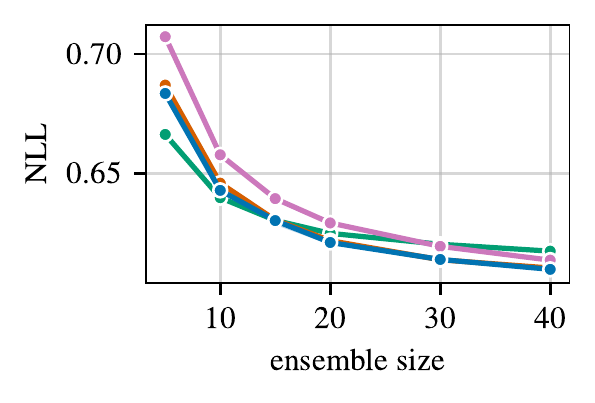}
        \caption{NLL}
        \label{fig:depth-nll}
    \end{subfigure}
    \caption{Bias, variance, error and NLL on CIFAR-100. Networks are trained with different initial random seeds and different depths and then ensembled in probability space (\emph{primal ensembling}), either for a fixed depth ($d=4, 6, 8$), or with depths sampled uniformly ($d \sim \textup{Unif}(\{4, 6, 8\})$). We average over three estimates of the bias and variance, each estimate using 5 draws of an ensemble. Ensembling over depths dramatically reduces the bias, but in turn increases the variance. However, the increase in variance is much smaller than the decrease in bias, and ensembles over multiple depths outperform ensembles of fixed depth -- even when the expected number of parameters is higher for fixed-depth ensembles.}
    \label{fig:depths}

    \centering
    \includegraphics{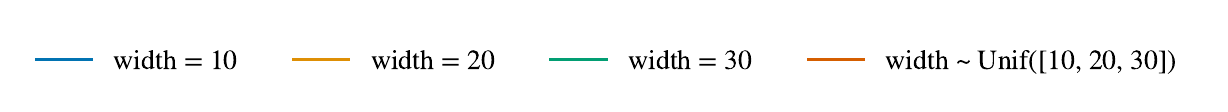}
    
    \begin{subfigure}{.32\textwidth}
        \vspace{-1em}
        \centering
        \includegraphics[width=\textwidth]{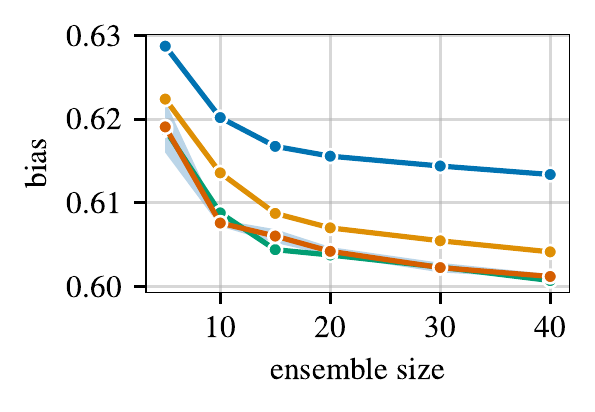}
        \caption{Bias}
        \label{fig:width-bias}
    \end{subfigure}
    \begin{subfigure}{.32\textwidth}
        \vspace{-1em}
        \centering
        \includegraphics[width=\textwidth]{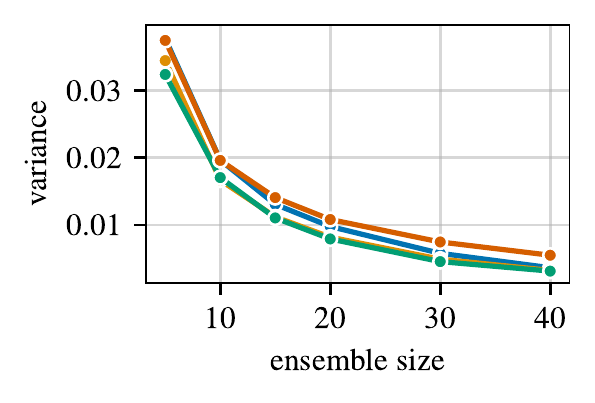}
        \caption{Variance}
        \label{fig:width-variance}
    \end{subfigure}
    \begin{subfigure}{.32\textwidth}
        \vspace{-1em}
        \centering
        \includegraphics[width=\textwidth]{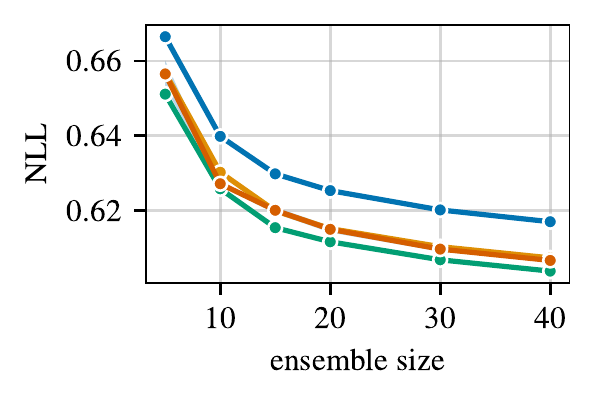}
        \caption{Cross entropy loss}
        \label{fig:width-nll}
    \end{subfigure}
\caption{Bias, variance and NLL on CIFAR-100. Networks are trained with different initial random seeds and different widths and then ensembled together in probability space (\emph{primal ensembling}), either for a fixed width (width $=10, 20, 30$) or with widths sampled uniformly (width $ \sim \textup{Unif}(\{10, 20, 30\})$.  We average over three estimates of the bias and variance, each estimate using 5 draws of an ensemble. As when varying the depth, ensembling over the width improves the bias at the cost of the variance. However, the bias improvements for ensembles of models with random widths are too low to improve over the overall performance of standard deep ensembles: the random width ensemble and the ensemble of width=$20$ have roughly the same number of parameters and similar NLLs.}
\label{fig:widths}
\end{figure*}

Different forms of networks ensembles have been shown to achieve state-of-the-art results~\citep{ensembles-in-ml,deep-ensembles,moe,ovadia2019,gustafsson2019evaluating}.  These ensembling techniques range from deep ensembles~\citep{deep-ensembles}, which ensemble over the uniform distribution over random seeds, to ensembling over larger hypothesis spaces, architectures, and hyperparameters~\citep{zaidi2020neural, simonyan2014very, he2016deep, antoran2020depth,wenzel2020hyperparameter}.

Our previous analysis covers deep ensembles (which sample ensemble members in \iid fashion over random seeds, as required by Props.~\ref{prop:vanilla-variance}, \ref{prop:bias} and \ref{prop:dual-bias}). To test whether improving the bias over the variance is a viable path to improving classification performance, we now focus on three ensembling techniques over hyperparameters: the width, depth, or regularization hyperparameters of the model. These techniques have been shown to be successful for several classification tasks~\citep{zaidi2020neural,wenzel2020hyperparameter}. 

As we are interested in the effects of ensembling over bias and variance rather than improving state-of-the-art results, we reuse standard hyperparameters for a single WRN 28-10. When ensembling over learning hyperparameters~\citep{wenzel2020hyperparameter}, we reuse the hyperparameters of the original paper. Bias and variance are estimated using 5 draws of an ensemble, over 3 total runs for depth and width ensembles, and over 2 runs for hyperdeep ensembles due to computational costs. Additional details are provided in \cref{app:experimental}.

\paragraph{Ensembling over depths.} \citet{zaidi2020neural} showed that ensembling over different model architectures can yield a larger gain in accuracy compared to deep ensembles. We begin by ensembling over different depths, training 100 WRNs of depths of 28, 40, and 52 following the prescribed pattern depth$=6d+4$, and set the width multiplier to 10.

As expected, Figure~\ref{fig:depths} shows that ensembling over different depths, $d \sim \textup{Unif}(\{4, 6, 8\})$, decreases the bias significantly more than just increasing the depth for standard ensembles (\cref{fig:depth-bias}).

Ensembling over depths also unsurprisingly increases the variance of the models (\cref{fig:depth-variance}). Overall, though, the bias decrease is sufficient to improve the overall performance (\cref{fig:depth-nll}), and ensembles of random depths outperform ensembles of fixed depth $=40$ ($d=6$), which have roughly the same number of parameters.

\paragraph{Ensembling over widths.} We repeat the above process, fixing the depth to $6d+4=28$ and letting the width multiplier take values in $\{10, 20, 30\}$. We see in Figure~\ref{fig:widths} that ensembling across multiple widths is less successful than ensembling over depths: the bias reduction (\cref{fig:width-bias}) is not sufficient to compensate for the gain in variance (\cref{fig:width-variance}). 

\begin{wrapfigure}{r}{0.4\textwidth}
\centering
\vspace{-1em}
    \includegraphics[width=.4\columnwidth]{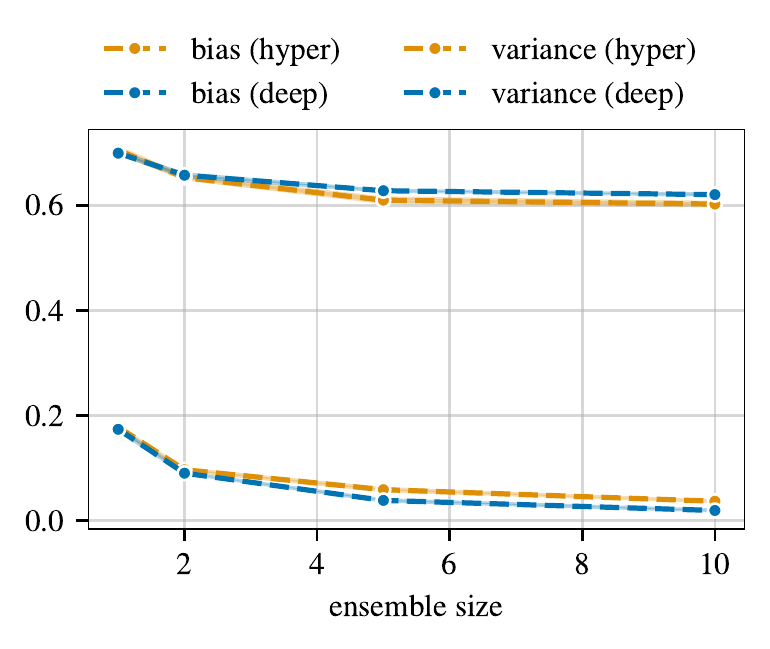}
    \caption{\small Deep vs. hyperdeep ensembles on CIFAR-100.}
     \label{fig:hyperdeep}
\vspace{-3em}
\end{wrapfigure}
Thus, ensembles of random widths perform on par with ensembles of fixed width $=20$ that have roughly the same number of parameters (\cref{fig:width-nll}).

\textbf{Hyperdeep ensembles.}~~ 
\cite{wenzel2020hyperparameter} showed that ensembling over networks trained with multiple hyperparameters lead to additional gain in accuracy.  As previously, we see in \cref{fig:hyperdeep} that ensembling over hyperparameters reduces the bias, at the cost of increasing the variance.

Overall, we see across all three experiments that ensembling over hyperparameters of the model yields lower biases and higher variances than ensembling over random seeds. These results are in line with previous literature~\citep{yang2020rethinking} as well as our intuition: increasing the hypothesis space of a model should reduce the bias but increase the variance.

\section{Conclusion and Future Work}
Ensembles of deep classifiers achieve state-of-the-art performance across a variety of benchmark tasks. Where ensembling has previously been analyzed for regression models through the lens of the bias-variance decomposition, applying this decomposition proves more complicated for  non-symmetric losses.

We begin by reformulating the bias-variance decomposition for Bregman divergences by~\citet{generalized_bvd}, showing that for any loss function that is a Bregman divergence, the key quantity required to define the bias and variance is the dual mean $\dm X = (\E X^*)^*$ of the prediction random variable $X$. When the loss is the mean-squared error, we recover $\dm X = \E X$, but this equality does not hold as soon as the loss is non-symmetric.
 
Using this reparameterization in dual space of the central prediction, we show that the model variance satisfies a generalized law of total variance, allowing us to disentangle the contributions of different sources of randomness on model performance. Unfortunately, it is impossible to estimate the bias and variance directly, as certain sources of randomness cannot typically be controlled. We show that the resulting \emph{conditional} estimates are guaranteed to overestimate the bias and underestimate the variance by a fixed quantity; however, these estimates can be improved by iterated bootstrapping.

The dual perspective on the bias-variance tradeoff also allows to introduce a theoretical framework for ensembling under which standard (regression) ensembling behavior is recovered in the classification setting. We show theoretically and empirically that ensembling in dual space will always reduce the variance and leave the bias unchanged. In primal space, however, ensembling will reduce the variance under gentle assumptions, but can have arbitrary effects on the bias. This leads us to hypothesise that, contrary to models trained with the Euclidean squared loss, there may be further room to improve ensembles of classification models by focusing on bias reduction over variance reduction.

We test these hypotheses by evaluating recent ensembling techniques that ensemble over different hyperparameters rather than simply ensembling over the training algorithm's random seed. As expected, we see that these methods reduce the bias significantly more than vanilla ensembles, albeit at the cost of a comparatively lesser increase in variance.

Finally, we note that the bias-variance perspective may be helpful to further understand how diversity affects the behavior of an ensemble, as suggested by \cref{rmk:diversity}.
\newpage
\clearpage
\bibliography{refs}

\begin{thebibliography}{48}
\providecommand{\natexlab}[1]{#1}
\providecommand{\url}[1]{\texttt{#1}}
\expandafter\ifx\csname urlstyle\endcsname\relax
  \providecommand{\doi}[1]{doi: #1}\else
  \providecommand{\doi}{doi: \begingroup \urlstyle{rm}\Url}\fi

\bibitem[Adlam \& Pennington(2020)Adlam and Pennington]{adlam2020}
Ben Adlam and Jeffrey Pennington.
\newblock Understanding double descent requires a fine-grained bias-variance
  decomposition.
\newblock In \emph{Advances in Neural Information Processing Systems}, 2020.

\bibitem[Allen-Zhu \& Li(2020)Allen-Zhu and Li]{allen2020towards}
Zeyuan Allen-Zhu and Yuanzhi Li.
\newblock Towards understanding ensemble, knowledge distillation and
  self-distillation in deep learning.
\newblock \emph{arXiv:2012.09816}, 2020.

\bibitem[Antoran et~al.(2020)Antoran, Allingham, and
  Hern\'{a}ndez-Lobato]{antoran2020depth}
Javier Antoran, James Allingham, and Jos\'{e}~Miguel Hern\'{a}ndez-Lobato.
\newblock Depth uncertainty in neural networks.
\newblock In \emph{Advances in Neural Information Processing Systems},
  volume~33, 2020.

\bibitem[Banerjee et~al.(2005)Banerjee, Merugu, Dhillon, and
  Ghosh]{bregman-clustering}
Arindam Banerjee, Srujana Merugu, Inderjit~S. Dhillon, and Joydeep Ghosh.
\newblock {Clustering with Bregman divergences}.
\newblock \emph{Journal of Machine Learning Research}, 2005.
\newblock ISSN 1532-4435.

\bibitem[Bauschke \& Borwein(2001)Bauschke and Borwein]{bregman-convex}
Heinz~H. Bauschke and Jonathan~M. Borwein.
\newblock Joint and separate convexity of the {B}regman distance.
\newblock In \emph{Inherently Parallel Algorithms in Feasibility and
  Optimization and their Applications}. 2001.

\bibitem[Belkin et~al.(2019)Belkin, Hsu, Ma, and Mandal]{Belkin15849}
Mikhail Belkin, Daniel Hsu, Siyuan Ma, and Soumik Mandal.
\newblock Reconciling modern machine-learning practice and the classical
  bias{\textendash}variance trade-off.
\newblock \emph{Proceedings of the National Academy of Sciences}, 2019.
\newblock ISSN 0027-8424.

\bibitem[Breiman(1996)]{breiman1996bias}
Leo Breiman.
\newblock Bias, variance, and arcing classifiers.
\newblock Technical report, 1996.

\bibitem[Brofos \& Shu(2019)Brofos and Shu]{Brofos2019ABD}
James~A. Brofos and Rui Shu.
\newblock A bias-variance decomposition for {B}ayesian deep learning.
\newblock In \emph{NeurIPS 2019 Workshop on Bayesian Deep Learning}, 2019.

\bibitem[Buschj{\"{a}}ger et~al.(2020)Buschj{\"{a}}ger, Pfahler, and
  Morik]{buschjager}
Sebastian Buschj{\"{a}}ger, Lukas Pfahler, and Katharina Morik.
\newblock Generalized negative correlation learning for deep ensembling.
\newblock \emph{CoRR}, abs/2011.02952, 2020.

\bibitem[Cesa-Bianchi \& Lugosi(2006)Cesa-Bianchi and
  Lugosi]{cesa2006prediction}
Nicolo Cesa-Bianchi and G{\'a}bor Lugosi.
\newblock \emph{Prediction, learning, and games}.
\newblock {Cambridge University Press}, 2006.

\bibitem[d'Ascoli et~al.(2020)d'Ascoli, Refinetti, Biroli, and
  Krzakala]{dascoli2020double}
Stéphane d'Ascoli, Maria Refinetti, Giulio Biroli, and Florent Krzakala.
\newblock Double trouble in double descent : Bias and variance(s) in the lazy
  regime.
\newblock In \emph{{International Conference on Machine Learning}}, 2020.

\bibitem[Dietterich(2000)]{ensembles-in-ml}
Thomas~G. Dietterich.
\newblock Ensemble methods in machine learning.
\newblock In \emph{International Workshop on Multiple Classifier Systems},
  2000.
\newblock ISBN 3540677046.

\bibitem[Dietterich(2005)]{diettrich}
Thomas~G. Dietterich.
\newblock Bias-variance theory, 2005.

\bibitem[Domingos(2000)]{Domingos00aunified}
Pedro Domingos.
\newblock A unified bias-variance decomposition and its applications.
\newblock In \emph{{International Conference on Machine Learning}}, 2000.

\bibitem[Efron \& Tibshirani(1994)Efron and Tibshirani]{efron1994introduction}
Bradley Efron and Robert~J Tibshirani.
\newblock \emph{An introduction to the bootstrap}.
\newblock Chapman \& Hall/CRC, 1994.

\bibitem[Fort et~al.(2019)Fort, Hu, and Lakshminarayanan]{fort2019deep}
Stanislav Fort, Huiyi Hu, and Balaji Lakshminarayanan.
\newblock Deep ensembles: A loss landscape perspective.
\newblock \emph{arXiv:1912.02757}, 2019.

\bibitem[Geman et~al.(1992)Geman, Bienenstock, and Doursat]{geman1992neural}
Stuart Geman, Elie Bienenstock, and Ren{\'e} Doursat.
\newblock Neural networks and the bias/variance dilemma.
\newblock \emph{Neural computation}, 1992.

\bibitem[Gustafsson et~al.(2020)Gustafsson, Danelljan, and
  Sch{\"o}n]{gustafsson2019evaluating}
Fredrik~K Gustafsson, Martin Danelljan, and Thomas~B Sch{\"o}n.
\newblock Evaluating scalable bayesian deep learning methods for robust
  computer vision.
\newblock In \emph{IEEE/CVF Conference on Computer Vision and Pattern
  Recognition (CVPR) Workshops}, 2020.

\bibitem[Hall(1992)]{edgeworth}
Peter Hall.
\newblock \emph{The bootstrap and Edgeworth expansion / Peter Hall}.
\newblock Springer-Verlag New York, 1992.
\newblock ISBN 3540977201 0387977201.

\bibitem[Hansen \& Heskes(2000)Hansen and Heskes]{hansen2000general}
Jakob~Vogdrup Hansen and Tom Heskes.
\newblock General bias/variance decomposition with target independent variance
  of error functions derived from the exponential family of distributions.
\newblock In \emph{Proceedings 15th International Conference on Pattern
  Recognition. ICPR-2000}, volume~2, pp.\  207--210. IEEE, 2000.

\bibitem[Hansen \& Salamon(1990)Hansen and Salamon]{nnens}
L.K. Hansen and P.~Salamon.
\newblock Neural network ensembles.
\newblock \emph{IEEE Transactions on Pattern Analysis and Machine
  Intelligence}, 1990.
\newblock \doi{10.1109/34.58871}.

\bibitem[He et~al.(2016)He, Zhang, Ren, and Sun]{he2016deep}
Kaiming He, Xiangyu Zhang, Shaoqing Ren, and Jian Sun.
\newblock Deep residual learning for image recognition.
\newblock In \emph{IEEE conference on computer vision and pattern recognition},
  2016.

\bibitem[Heskes(1998)]{heskes1998bias}
Tom Heskes.
\newblock Bias/variance decompositions for likelihood-based estimators.
\newblock \emph{Neural Computation}, 1998.

\bibitem[Hoffmann \& Elster(2021)Hoffmann and Elster]{hoffmann2021deep}
Lara Hoffmann and Clemens Elster.
\newblock Deep ensembles from a {B}ayesian perspective.
\newblock \emph{arXiv:2105.13283}, 2021.

\bibitem[James(2003)]{james2003variance}
Gareth~M James.
\newblock Variance and bias for general loss functions.
\newblock \emph{Machine learning}, 2003.

\bibitem[Jiang et~al.(2017)Jiang, Liu, Fu, and Wu]{jiang2017generalized}
Zhengshen Jiang, Hongzhi Liu, Bin Fu, and Zhonghai Wu.
\newblock Generalized ambiguity decompositions for classification with
  applications in active learning and unsupervised ensemble pruning.
\newblock In \emph{Proceedings of the AAAI Conference on Artificial
  Intelligence}, volume~31, 2017.

\bibitem[Juditsky et~al.(2021)Juditsky, Kwon, and Éric
  Moulines]{juditsky2021unifying}
Anatoli Juditsky, Joon Kwon, and Éric Moulines.
\newblock Unifying mirror descent and dual averaging.
\newblock \emph{arXiv:1910.13742}, 2021.

\bibitem[Kobayashi et~al.(2021)Kobayashi, von Oswald, and
  Grewe]{kobayashi2021reversed}
Seijin Kobayashi, Johannes von Oswald, and Benjamin Grewe.
\newblock On the reversed bias-variance tradeoff in deep ensembles.
\newblock In \emph{ICML 2021 Workshop on Uncertainty and Robustness in Deep
  Learning}, 2021.

\bibitem[Kong \& Dietterich(1995)Kong and Dietterich]{kong1995error}
Eun~Bae Kong and Thomas~G. Dietterich.
\newblock Error-correcting output coding corrects bias and variance.
\newblock In \emph{International Conference on Machine Learning}, 1995.

\bibitem[Lakshminarayanan et~al.(2017)Lakshminarayanan, Pritzel, and
  Blundell]{deep-ensembles}
Balaji Lakshminarayanan, Alexander Pritzel, and Charles Blundell.
\newblock Simple and scalable predictive uncertainty estimation using deep
  ensembles.
\newblock In \emph{Advances in Neural Information Processing Systems}, 2017.

\bibitem[Lobacheva et~al.(2020)Lobacheva, Chirkova, Kodryan, and
  Vetrov]{lobacheva2020power}
Ekaterina Lobacheva, Nadezhda Chirkova, Maxim Kodryan, and Dmitry Vetrov.
\newblock On power laws in deep ensembles.
\newblock \emph{arXiv:2007.08483}, 2020.

\bibitem[Masegosa(2020)]{masegosa}
Andres Masegosa.
\newblock Learning under model misspecification: Applications to variational
  and ensemble methods.
\newblock In \emph{Advances in Neural Information Processing Systems}, 2020.

\bibitem[Nado et~al.(2021)Nado, Band, Collier, Djolonga, Dusenberry, Farquhar,
  Filos, Havasi, Jenatton, Jerfel, Liu, Mariet, Nixon, Padhy, Ren, Rudner, Wen,
  Wenzel, Murphy, Sculley, Lakshminarayanan, Snoek, Gal, and
  Tran]{nado2021uncertainty}
Zachary Nado, Neil Band, Mark Collier, Josip Djolonga, Michael Dusenberry,
  Sebastian Farquhar, Angelos Filos, Marton Havasi, Rodolphe Jenatton, Ghassen
  Jerfel, Jeremiah Liu, Zelda Mariet, Jeremy Nixon, Shreyas Padhy, Jie Ren, Tim
  Rudner, Yeming Wen, Florian Wenzel, Kevin Murphy, D.~Sculley, Balaji
  Lakshminarayanan, Jasper Snoek, Yarin Gal, and Dustin Tran.
\newblock {Uncertainty Baselines}: Benchmarks for uncertainty \& robustness in
  deep learning.
\newblock \emph{arXiv:2106.04015}, 2021.

\bibitem[Neal et~al.(2019)Neal, Mittal, Baratin, Tantia, Scicluna,
  Lacoste-Julien, and Mitliagkas]{neal2019modern}
Brady Neal, Sarthak Mittal, Aristide Baratin, Vinayak Tantia, Matthew Scicluna,
  Simon Lacoste-Julien, and Ioannis Mitliagkas.
\newblock A modern take on the bias-variance tradeoff in neural networks.
\newblock \emph{arXiv:1810.08591}, 2019.

\bibitem[Nemirovski \& Yudin(1983)Nemirovski and Yudin]{nemirovski1983problem}
A.S. Nemirovski and D.B. Yudin.
\newblock \emph{Problem Complexity and Method Efficiency in Optimization}.
\newblock Wiley, 1983.

\bibitem[Nesterov(2009)]{DBLP:journals/mp/Nesterov09}
Yurii~E. Nesterov.
\newblock Primal-dual subgradient methods for convex problems.
\newblock \emph{Math. Program.}, 2009.

\bibitem[Ortega et~al.(2021)Ortega, Caba{\~n}as, and
  Masegosa]{ortega2021diversity}
Luis~A Ortega, Rafael Caba{\~n}as, and Andr{\'e}s~R Masegosa.
\newblock Diversity and generalization in neural network ensembles.
\newblock \emph{arXiv:2110.13786}, 2021.

\bibitem[Ovadia et~al.(2019)Ovadia, Fertig, Ren, Nado, Sculley, Nowozin,
  Dillon, Lakshminarayanan, and Snoek]{ovadia2019}
Yaniv Ovadia, Emily Fertig, Jie Ren, Zachary Nado, D.~Sculley, Sebastian
  Nowozin, Joshua Dillon, Balaji Lakshminarayanan, and Jasper Snoek.
\newblock {Can you trust your model's uncertainty? Evaluating predictive
  uncertainty under dataset shift}.
\newblock In \emph{Advances in Neural Information Processing Systems}, 2019.

\bibitem[Pfau(2013)]{generalized_bvd}
David Pfau.
\newblock {A Generalized Bias-Variance Decomposition for Bregman Divergences}.
\newblock \url{http://davidpfau.com/assets/generalized_bvd_proof.pdf}, 2013.

\bibitem[Ruiz et~al.(2021)Ruiz, Puigcerver, Mustafa, Neumann, Jenatton, Pinto,
  Keysers, and Houlsby]{moe}
Carlos~Riquelme Ruiz, Joan Puigcerver, Basil Mustafa, Maxim Neumann, Rodolphe
  Jenatton, Andr{\'e}~Susano Pinto, Daniel Keysers, and Neil Houlsby.
\newblock Scaling vision with sparse mixture of experts.
\newblock In \emph{Advances in Neural Information Processing Systems}, 2021.

\bibitem[Simonyan \& Zisserman(2015)Simonyan and Zisserman]{simonyan2014very}
Karen Simonyan and Andrew Zisserman.
\newblock Very deep convolutional networks for large-scale image recognition.
\newblock In \emph{International Conference on Learning Representations}, 2015.

\bibitem[Webb et~al.(2020)Webb, Reynolds, Chen, Reeve, Iliescu, Lujan, and
  Brown]{webb2020ensemble}
Andrew Webb, Charles Reynolds, Wenlin Chen, Henry Reeve, Dan Iliescu, Mikel
  Lujan, and Gavin Brown.
\newblock To ensemble or not ensemble: When does end-to-end training fail?
\newblock In \emph{{ECML PKDD}}, 2020.

\bibitem[Wenzel et~al.(2020)Wenzel, Snoek, Tran, and
  Jenatton]{wenzel2020hyperparameter}
Florian Wenzel, Jasper Snoek, Dustin Tran, and Rodolphe Jenatton.
\newblock Hyperparameter ensembles for robustness and uncertainty
  quantification.
\newblock In \emph{Advances in Neural Information Processing Systems}, 2020.

\bibitem[Wilson \& Izmailov(2020)Wilson and Izmailov]{wilson2020bayesian}
{Andrew Gordon} Wilson and Pavel Izmailov.
\newblock Bayesian deep learning and a probabilistic perspective of
  generalization.
\newblock \emph{Advances in Neural Information Processing Systems}, 2020.
\newblock ISSN 1049-5258.

\bibitem[Yang et~al.(2020)Yang, Yu, You, Steinhardt, and
  Ma]{yang2020rethinking}
Zitong Yang, Yaodong Yu, Chong You, Jacob Steinhardt, and Yi~Ma.
\newblock Rethinking bias-variance trade-off for generalization of neural
  networks.
\newblock In \emph{International Conference on Machine Learning}. PMLR, 2020.

\bibitem[Zagoruyko \& Komodakis(2016)Zagoruyko and Komodakis]{wrn}
Sergey Zagoruyko and Nikos Komodakis.
\newblock Wide residual networks.
\newblock In \emph{British Machine Vision Conference (BMVC)}, September 2016.

\bibitem[Zaidi et~al.(2021)Zaidi, Zela, Elsken, Holmes, Hutter, and
  Teh]{zaidi2020neural}
Sheheryar Zaidi, Arber Zela, Thomas Elsken, Chris Holmes, Frank Hutter, and
  Yee~Whye Teh.
\newblock Neural ensemble search for performant and calibrated predictions.
\newblock In \emph{Avances in Neural Information Processing Systems}, 2021.

\bibitem[Zhou(2019)]{zhou2019ensemble}
Zhi-Hua Zhou.
\newblock \emph{Ensemble methods: foundations and algorithms}.
\newblock Chapman and Hall/CRC, 2019.

\end{thebibliography}
\bibliographystyle{tmlr}
\clearpage
\appendix

\section{Proofs}
\label{app:proofs}
\begin{proposition}[Generalized triangle inequality for Bregman divergences]
  For any $x, y, z$ in the domain of $F$, we have $D(x, z) = D(x, y) + D(y, z) + \scal{\nabla F(y) - F(z)}{x-y}$.
\end{proposition}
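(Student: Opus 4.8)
The statement is a pure algebraic identity, so my plan is simply to expand each of the three Bregman divergences via the definition in \cref{eq:bd} and collect terms; no convexity or inequality argument is needed. Indeed the claim is an \emph{equality}, and I read the bracket ``$\nabla F(y) - F(z)$'' as a typo for $\nabla F(y) - \nabla F(z)$, since only the latter is dimensionally sensible (both terms must be gradients).

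First I would write out, with the convention $D(a,b) = D_F\bdx a b$, the three expansions
\[
D(x,z) = F(x) - F(z) - \scal{\nabla F(z)}{x-z}, \quad D(x,y) = F(x) - F(y) - \scal{\nabla F(y)}{x-y}, \quad D(y,z) = F(y) - F(z) - \scal{\nabla F(z)}{y-z}.
\]
Then I would add $D(x,y)$ and $D(y,z)$: the $\pm F(y)$ terms cancel, leaving $F(x) - F(z) - \scal{\nabla F(y)}{x-y} - \scal{\nabla F(z)}{y-z}$. Subtracting this from $D(x,z)$ produces the residual $\scal{\nabla F(y)}{x-y} + \scal{\nabla F(z)}{(y-z)-(x-z)}$.

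The one place to be careful is the bookkeeping of the gradient inner products: noticing that $(y-z)-(x-z) = y-x = -(x-y)$ is exactly what collapses the two $\nabla F(z)$ contributions into a single $-\scal{\nabla F(z)}{x-y}$ with the correct sign. The residual then becomes $\scal{\nabla F(y) - \nabla F(z)}{x-y}$, which is precisely the claimed correction term, and rearranging gives $D(x,z) = D(x,y) + D(y,z) + \scal{\nabla F(y) - \nabla F(z)}{x-y}$. There is no genuine obstacle here; the ``hard part'' is purely clerical sign-tracking, and the identity holds for any strictly convex differentiable $F$ with no extra hypotheses (no symmetry, joint convexity, or normalization of $F$ are required), which is worth remarking since the surrounding results do invoke such assumptions.
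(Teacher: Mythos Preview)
Your proof is correct, including your reading of $\nabla F(y) - F(z)$ as a typo for $\nabla F(y) - \nabla F(z)$; the algebra you carry out is the standard verification of this well-known three-point identity. The paper in fact states this proposition without proof, presumably treating it as background, so there is no alternative argument to compare against; your direct expansion from \cref{eq:bd} is exactly how one proves it.
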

\subsection{Bias-variance decomposition}
\propconditionalsecond*
\begin{proof}
Applying \eqref{eq:bv} to the conditional bias $\E_Z D[y \| \dm (X|Z)]$, we have
  \begin{align*}
      \E_Z \bd{y}{\dm (X\mid Z)} &= \bd[\Big] {y}{\dm [\dm (X|Z)} + \E_Z \bd[\Big]{\dm[\dm (X|Z)]}{\dm (X|Z)} \\
      &= \bd {y}{\dm X} + \E_Z \bd{\dm X}{\dm (X|Z)},
  \end{align*}
where the last equality stems from the law of iterated expectations for $\dm$, showing the equality for the bias terms. The result for the variance terms follows immediately, as conditional bias and variance have the same sum as the full bias and variance.
\end{proof}


\subsection{Ensembling}
\propvanillavariance*
\begin{proof}
Let $D: S \times S \to \mathbb R^+$ be a Bregman divergence jointly convex in both variables. Let $\hat X = \frac 1n \sum_i X_i$, where the $X_i$ are \iid. By convexity, for any $z \in \mathcal X$,
\begin{align*}
    \bd{z}{\hat X} &\le \frac 1n \sum_i \bd{z}{X_i} \\
    \E \bd{z}{\hat X} &\le \frac 1n \sum_i \E \bd{z}{X_i} = \E \bd{z}{X} \\
    \min_z \E \bd{z}{\hat X} &\le \min_z \E \bd z X.
\end{align*}
As $\dm X = \argmin_z \E \bd z X$, it follows that $\E \bd{\dm X} X = \min_z \E \bd z X$, concluding the proof.
\end{proof}

\propbias*
\begin{proof}
  For any one-hot label $y \in \{0, 1\}$ and probability vector $x$, we have $\kl(y \| x) = \log x_y$, and $\kl[1-y \|x] = \log(1-x_y)$. As $x \to \log 1-x$ is decreasing, it suffices to prove that there exists a distribution $\mathcal D$ such that $\kl[y \| \dm \hat X] \neq \kl[y \| \dm X]$. In fact, it suffices to prove the existence of a distribution $\mathcal D$ such that $\dm X \neq \dm \hat X$.
  
  For the cross-entropy loss, we know\footnote{See, \eg, ~\citep{yang2020rethinking}.} that $\dm X = \textup{softmax}(e^{\E \log X})$. Let $\mathcal D$ be the distribution that assigns equal probability to $x=(0.8, 0.2)$ and $x=(0.6, 0.4)$, and is zero elsewhere. The equivalent ensemble distribution assigns $1/4$ probability to $(0.8, 0.2)$ and $(0.6, 0.4)$, and $1/2$ probability to $(0.7, 0.3)$. A simple numerical computation then shows that $\dm X \neq \dm \hat X$, concluding our proof. 
\end{proof}
\propdualbias*
\begin{proof}
To preserve bias, it suffices to have $\dm \hat X = \dm X$. By definition of $\hat X$, we have
\[\dm \hat X = \Big(\E \hat X^* \Big)^* = \Big(\E \Big[\frac 1n \nlsum_i X_i^*\Big]\Big)^* = (\E X^*)^* = \dm X.\]
We now focus on the variance. Using the fact that $D_{F}[p\| q] = D_{F^*}[q^*\| p^*]$ \citep[Chapter 11]{cesa2006prediction}, we have
\begin{align*}
    \E D_F(\dm \hat X \| \hat X) &= \E D_F[\dm X \| \hat X] \\
                               &= \E D_{F^*}[\hat X^* \| (\dm X)^*] \\    
                               &= \E D_{F^*}\Big[\frac 1n \nlsum_i  X_i^* \| (\dm X)^*\Big] \\
                               &\overset{(a)}{\le} \frac 1n \nlsum_i \E D_{F^*}\Big[X_i^* \| (\dm X)^*\Big] \\
                               &\le \frac 1n \nlsum_i \E D_{F}\Big[\dm X \| X_i\Big] \\
                               &\le D_F[\dm X \| X].
\end{align*}
where $(a)$ follows from the convexity of $D_{F^*}$ in its first argument.
\end{proof}



\section{Experimental Details}
\label{app:experimental}
The models used in this work are wide residual networks (WRN-28-10)~\citep{wrn} with the cross-entropy loss unless specified otherwise. We train models with SGD + momentum to optimize the cross-entropy loss. We use the learning rate schedule, batch size, and data augmentations specified in the deterministic baseline provided by~\citet{nado2021uncertainty}.

As hyperdeep ensembles require the use of a validation set to form the ensemble, we nonetheless set aside  10\% of the training data for validation purposes across all methods. 

For hyperdeep ensembles, we exactly replicated the experimental setup from \cite{wenzel2020hyperparameter}. The only difference is we used $0.9$ fraction of the CIFAR dataset as training data and $0.1$ as validation. For each dataset, we train multiple ($1000$) neural networks where first multiple hyperparameters are chosen randomly from each neural network and then 10 random seeds are chosen for each setting of the hyperparameter. The hyperparameters that are varied are the same as in \cite{wenzel2020hyperparameter} and include $\ell_2$ regularization for various layers and label smoothing.

As in \cite{wenzel2020hyperparameter}, we used the greedy selection strategy to form the ensemble where each ensemble member is chosen greedily based on which member reduces the validation cross entropy loss the most. We trained a total of $1000$ models and then divided randomly into groups of $10$ where $10$ ensemble models were formed where each model was formed by the greedy selection strategy on one group of $100$ models. These $10$ ensemble models were used to do the bias variance decomposition where each decomposition used $5$ models and the bias variance values were averaged over $2$ runs. 

\section{Partitioned estimates of the bias and variance}
\label{app:partition}
\begin{figure}[!h]
\begin{subfigure}{.45\textwidth}
\centering
 \includegraphics[width=.8\textwidth]{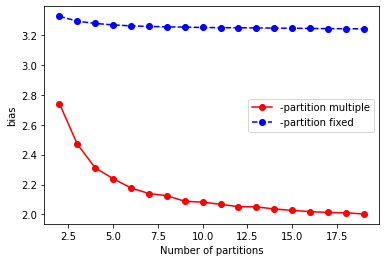}
 \caption{Bias}
\end{subfigure}
\begin{subfigure}{.45\textwidth}
\centering
 \includegraphics[width=.8\textwidth]{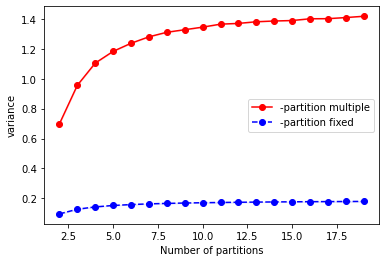}
 \caption{Variance}
\end{subfigure}
\caption{Bias and variance of a smaller WRN-16-5 over the CIFAR-100 dataset. We create 20 partitions of the CIFAR-100 dataset, and estimate the bias either by conditioning on a partition (partition fixed), or by including the partition into the expectations that define the bias and variance (partition multiple). We see that it takes $\ge$ 10 partitions for the estimates of bias and variance to begin converging, and that the converged values appear to still show the bias dominating the variance.}
\label{fig:partitions}
\end{figure}

\clearpage
\section{Depth and width experiments on CIFAR-10}
\label{app:experiments}

\begin{figure}[h]
    \centering
    \includegraphics[width=.9\textwidth]{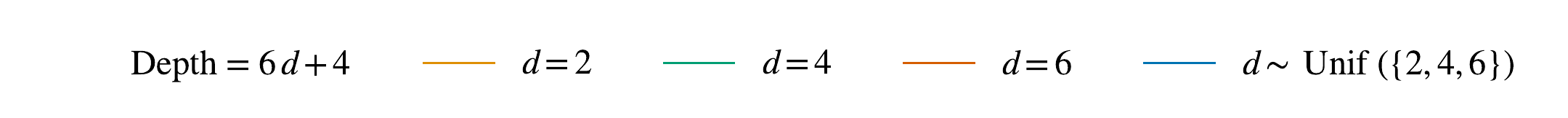}
     \begin{subfigure}{.32\textwidth}
       \centering
       \includegraphics[width=\textwidth]{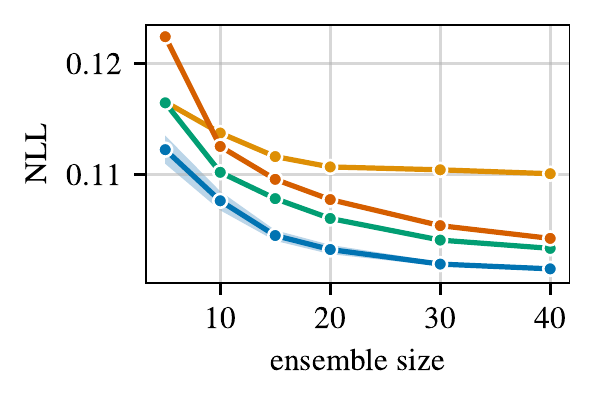}
       \caption{Cross entropy loss}
     \end{subfigure}
     \begin{subfigure}{.32\textwidth}
       \centering
       \includegraphics[width=\textwidth]{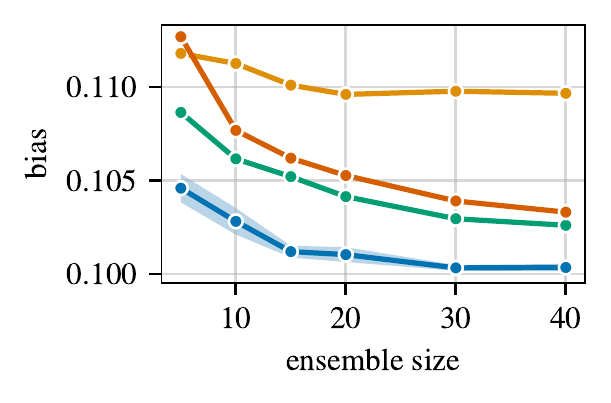}
       \caption{Bias}
     \end{subfigure}
     \begin{subfigure}{.32\textwidth}
         \centering
         \includegraphics[width=\textwidth]{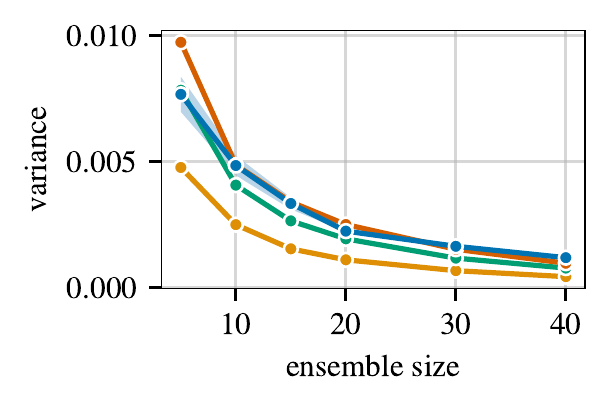}
         \caption{Variance}
     \end{subfigure}
    \caption{\small Plots for bias, variance, error and accuracy with increasing ensemble size for CIFAR-10 when multiple networks are trained with different initial random seeds and different depths and then ensembled together in probability space. The line with d=d denotes the setting where different models with depth as $6d+4$ are ensembled together. The line with $d=\text{Unif}[2,4,6]$ denotes the setting where we first randomly sample a depth from $[16,28,40]$ and then randomly sample a model with that depth. }
    \label{fig:depths-cifar10}
\end{figure}

\begin{figure}[h]
\centering
\includegraphics{figures/widths/width_ensembles_legend.pdf}

     \begin{subfigure}{.32\textwidth}
         \centering
         \includegraphics[width=\textwidth]{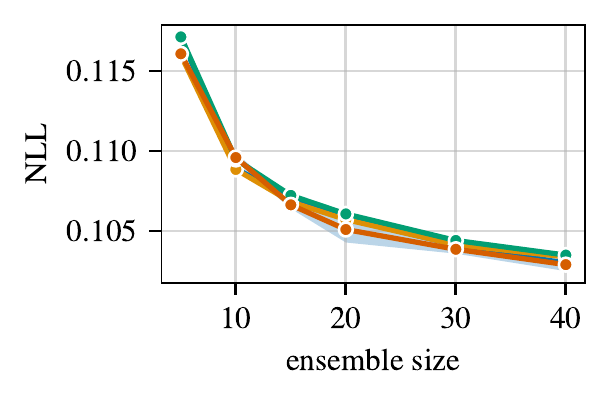}
         \caption{Cross entropy loss}
     \end{subfigure}
     \begin{subfigure}{.32\textwidth}
         \centering
         \includegraphics[width=\textwidth]{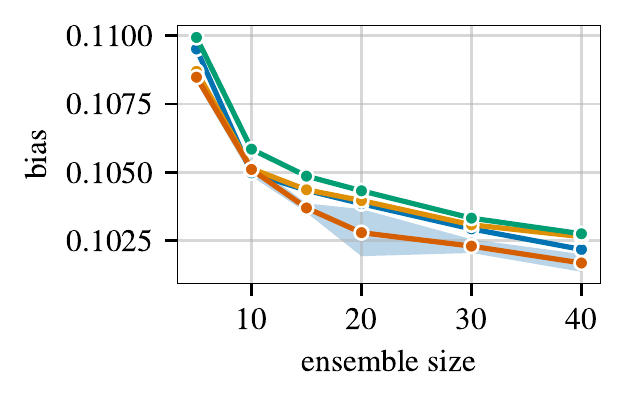}
         \caption{Bias}
     \end{subfigure}
     \begin{subfigure}{.32\textwidth}
         \centering
         \includegraphics[width=\textwidth]{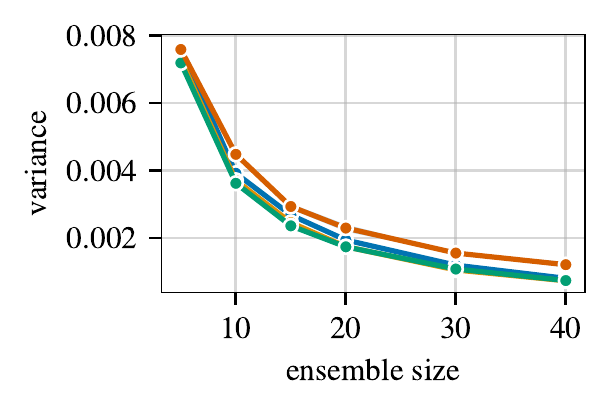}
         \caption{Variance}
     \end{subfigure}
\caption{Plots for bias, variance, error and accuracy with increasing ensemble size for CIFAR-10 when multiple networks are trained with different initial random seeds and different widths and then ensembled together in probability space. The line with width=w denotes the setting where different models with width as w are ensembled together. The line with width=Unif$[10,20,30]$ denotes the setting where we first randomly sample a width from [10, 20, 30] and then randomly sample a model with that width. }
\label{fig:widths-cifar10}
\end{figure}


\end{document}